\pdfoutput=1
\documentclass{article}

\usepackage[final]{corl_2026}

\usepackage{amsmath,amssymb,amsthm}
\usepackage{bm}
\usepackage{booktabs}
\usepackage{graphicx}
\usepackage{pdflscape}
\usepackage{wrapfig}


\definecolor{bcred}{rgb}{0.84,0.00,0.00}
\definecolor{qmaxorange}{rgb}{0.94,0.51,0.00}
\definecolor{awlsgreen}{rgb}{0.11,0.62,0.19}

\newtheorem{proposition}{Proposition}
\newtheorem{remark}{Remark}

\title{{AWLS}: Advantage-Weighted Latent Steering\\for One-Step Generative Robotic Manipulation\\Policies}

\author{
  Jiahong Tang$^{1,2}$ \And Jiaju He$^{1,2}$ \And Zhiyuan Yang$^{1,2}$ \And
  Ruyang Han$^{1,2}$ \And Haiyue Zhu$^{2\ddagger}$
  \AND
  \mdseries $^{1}$Nanyang Technological University, Singapore \\
  $^{2}$Singapore Institute of Manufacturing Technology (SIMTech), \\
  Agency for Science, Technology and Research (A*STAR), Singapore \\[2pt]
  $^{\ddagger}$Corresponding author
}

\hypersetup{
  pdftitle={AWLS: Advantage-Weighted Latent Steering for One-Step Generative Robotic Manipulation Policies},
  pdfauthor={Jiahong Tang, Jiaju He, Zhiyuan Yang, Ruyang Han, Haiyue Zhu},
  pdfkeywords={Robot Manipulation, Generative Policies, Latent Steering}
}

\begin{document}

\label{pg:main-first}%
\raggedbottom
\maketitle

\begin{abstract}
Human-collected real-world robotic manipulation demonstrations often contain
mixed-quality trajectories, including hesitant, indirect, or inefficient
action segments.
While diffusion- and flow-based policies can model such multimodal data,
standard behavior cloning (BC) assigns equal likelihood to all demonstrated
actions and may therefore reproduce suboptimal modes.
Offline RL value estimates offer a natural signal for biasing policies
toward higher-value behavioral modes, but directly fine-tuning the
generative policy or using critic gradients for policy steering can amplify
critic errors and induce prior drift.
We propose Advantage-Weighted Latent Steering (AWLS), an offline
post-training method for improving frozen one-step generative manipulation
policies without updating their decoders.
AWLS learns a residual noise shaper, an observation-conditioned residual
correction to the policy's latent noise, and trains it with an
advantage-weighted action reconstruction objective.
The advantages are detached, scale-normalized, and used only as supervised
weights on demonstrated actions, avoiding critic-gradient policy updates
while preserving fast single-step inference.
We further introduce a rollout-free decoder-sensitivity criterion for
selecting pretrained checkpoints that remain responsive to latent-noise
perturbations.
Experiments on simulated and real-world robotic manipulation tasks with
clean and mixed-quality demonstrations show that AWLS improves over the
frozen base policy while limiting drift from the pretrained behavior prior.
Project page: \url{https://awls-corl2026.github.io/}.
\end{abstract}

\keywords{Robot Manipulation, Generative Policies, Latent Steering}

\section{Introduction}

Generative policies based on diffusion and flow
matching~\cite{chi2023diffusion, ze2024dp3, zhao2026fdvla} have become a
standard approach to imitation learning for robotic manipulation.
They are typically trained by behavior cloning (BC), which fits the policy to
human-collected demonstrations and therefore inherits their quality.
Yet even successful demonstrations are rarely uniformly optimal.
Some trajectories are smooth and efficient, while others are hesitant,
indirect, or unnecessarily long~\cite{brown2020better, mandlekar2021what}.
Standard BC does not distinguish between these quality levels, so it
reproduces hesitant behavior as faithfully as efficient behavior
(Figure~\ref{fig:teaser}, left).
Can a policy trained on such data instead be steered toward the better
behavior in those demonstrations?

\begin{figure}[!t]
\centering
\includegraphics[width=\textwidth]{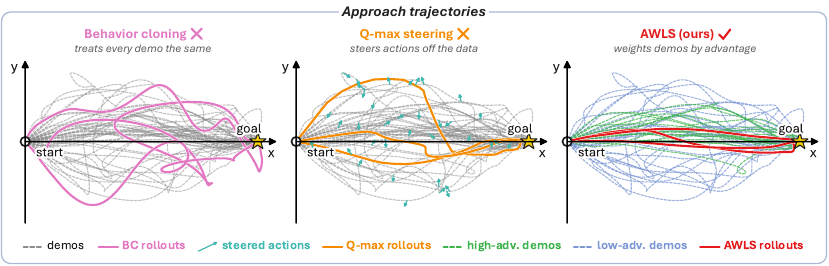}
\caption{Trajectory segments from human-collected robomimic can
  demonstrations~\cite{mandlekar2021what}, with policy rollouts.
  Thin dashed lines are demonstration segments from a shared start to the
  goal, and thick curves are illustrative policy rollouts.
  Behavior cloning treats every demonstration the same, so its rollouts
  reproduce the hesitant detours as faithfully as the efficient paths.
  Q-maximization steering pushes actions toward high-$Q$ estimates (arrows),
  so its rollouts leave the demonstrated region.
  AWLS weights demonstrations by advantage (green high, blue low) and reshapes
  the latent noise of the frozen decoder toward the high-advantage demonstrated
  mode, so its rollouts follow the efficient paths.}
\label{fig:teaser}
\vspace{-6pt}
\end{figure}

Offline reinforcement learning (RL) offers an answer: value estimates learned
from the same demonstrations can identify higher-advantage demonstrated
actions in their respective contexts.
The question then becomes how to inject this signal into a pretrained
generative policy.
Guiding the sampling process with the value
signal~\cite{janner2022diffuser, ajay2023decision, nakamoto2024vgps} keeps the
critic in the loop at inference time, and fine-tuning the decoder with value
gradients can amplify offline critic errors while eroding the pretrained
prior~\cite{ren2025dppo, sobolmark2024parl}.
This makes frozen-decoder latent steering an appealing interface: the
pretrained decoder is kept fixed, and improvement is attempted by reshaping
the latent noise.
Yet this interface exposes two hidden failure modes.
First, critic-gradient (Q-maximization) steering in latent
space~\cite{dsrl2025, im2026lps, patil2026goldenticket, lu2026unifiednoise}
can still chase extrapolated value artifacts rather than demonstrated
high-value behavior (Figure~\ref{fig:teaser}, middle).
Second, latent steering is only effective if the frozen decoder remains
sensitive to its latent noise, a property that behavior cloning may weaken
during training.

We propose Advantage-Weighted Latent Steering (AWLS) to mitigate both failure
modes simultaneously.
AWLS post-trains the policy with the pretrained decoder frozen, learning only
a residual noise shaper that applies observation-conditioned corrections to
the latent-noise input.
These corrections are initialized near zero to bias post-training toward the
original latent prior.
Critically, the offline critic never provides a gradient to the policy:
advantages estimated via implicit Q-learning (IQL)~\cite{kostrikov2022iql}
are detached and enter
training only as scalar weights on demonstrated
actions~\cite{peng2019awr} (Figure~\ref{fig:teaser}, right)---critic errors
may change which demonstrated actions are emphasized without directly
providing gradients toward potentially unsupported actions.
Two additional mechanisms help make this interface practical across tasks:
scale-normalized advantage weighting bounds weight concentration
independently of reward magnitude, and a rollout-free decoder-sensitivity
diagnostic selects frozen checkpoints that remain responsive to latent-noise
perturbations.

Our contributions are:
\textbf{(i)}~we identify two reliability issues in frozen latent steering
  for one-step generative manipulation policies, critic-gradient failure
  modes and decoder-sensitivity degradation, and address them with detached
  advantage weights and decoder-sensitivity checkpointing;
\textbf{(ii)}~we propose AWLS, a residual noise shaper trained by
  advantage-weighted reconstruction with scale-normalized weights while
  keeping the decoder frozen;
\textbf{(iii)}~we evaluate AWLS on simulation benchmarks, controlled
  mixed-quality demonstrations, and real-world robot tasks.

\section{Related Work}
\label{sec:related}

\paragraph{Generative policy learning.}
Diffusion and flow policies learn expressive action distributions for
manipulation~\cite{wang2025tocdp, wang2026dream}, using RGB and depth
perception~\cite{chi2023diffusion, ze2024dp3, yang2025hfdteacher} to condition
action generation.
Few- and one-step variants reduce sampling cost for fast closed-loop
control~\cite{prasad2024consistencypolicy, wang2025onedp}.
Building on these policy classes, offline RL uses learned value functions to
guide policy training and action
selection~\cite{wang2023diffusionql, hansenestruch2023idql, fql2025}.
Pretrained policies can also be adapted through offline-to-online RL,
policy-gradient updates, or generative-model
optimization~\cite{sobolmark2024parl, zhang2025gmpo, ren2025dppo,
mcallister2026fpo, domingo2025adjoint, zheng2026diffusionnft}.
In offline settings, value-gradient updates can exploit critic extrapolation
errors and move the policy away from its behavior-cloning
prior~\cite{kumar2020cql}.
Behavior-cloning regularization can reduce this drift, but the policy update
still depends on critic gradients.

\paragraph{Frozen-decoder post-training.}
A complementary line keeps the pretrained decoder fixed.
Inference-time guidance steers multi-step denoising with value functions,
return conditioning, or dynamics guidance~\cite{janner2022diffuser,
ajay2023decision, nakamoto2024vgps, du2025dynaguide}, but needs an iterative
sampler that one-step generators do not expose.
Noise-space steering instead adapts the latent noise of a frozen decoder
through critic gradients, rollout search, or external
feedback~\cite{dsrl2025, im2026lps, patil2026goldenticket,
lu2026unifiednoise}; critic-gradient variants still depend on the learned
value landscape, and search- and feedback-based variants need additional
evaluation signals.
Latent-action offline RL~\cite{zhou2021plas, chen2022lapo} trains a
VAE-style latent policy with value gradients or advantage weights, and
learned noise priors~\cite{ki2025pg} guide an iterative diffusion sampler.
AWLS differs from all of these in how it uses the latent interface.
Following advantage-weighted imitation~\cite{peng2019awr, nair2020awac,
wang2020crr}, offline advantages
estimated with IQL~\cite{kostrikov2022iql} serve only as detached weights for
supervised reconstruction of demonstrated actions.
AWLS therefore trains no new generative model, keeps single-step inference,
and needs neither value gradients nor critic-defined targets.

\section{Method}
\label{sec:method}

\begin{wrapfigure}{r}{0.56\textwidth}
  \vspace{-0.25\intextsep}% slight lift only; the cropped PDF has no top margin
  \centerline{\includegraphics[width=\linewidth]{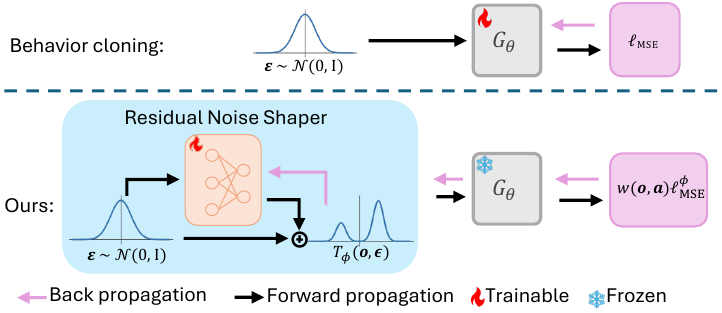}}
  \caption{\textbf{Overview of AWLS.}
    \emph{Top:} the one-step decoder $G_\theta$ is pretrained by behavior
    cloning. \emph{Bottom:} with $G_\theta$ frozen, $T_\phi$ reshapes the
    latent noise via advantage-weighted reconstruction using detached,
    normalized offline advantages.}
  \label{fig:method_overview}
\end{wrapfigure}
AWLS is a decoder-preserving post-training operator for one-step generative
robot policies.
It combines three components: the frozen decoder supplies the action prior,
an offline critic supplies only detached preferences over demonstrated
actions, and a residual noise shaper $T_\phi$, the only trainable module,
acts solely through the decoder's latent noise.
Figure~\ref{fig:method_overview} contrasts the two stages: the top row shows
standard behavior cloning, and the bottom row shows AWLS post-training through
a frozen decoder.
This section first states the problem through the one-step interface, then
derives the advantage-weighted latent objective, the scale-normalized weighting
rule, and the checkpoint criterion needed for the interface to remain
effective.

\subsection{Problem Statement: Frozen One-Step Decoders}
\label{sec:method_problem}

We consider offline post-training for a pretrained one-step generative
policy~\cite{lipman2023flow, geng2025meanflow, sheng2026mp1, zhan2026mvp}
that maps the current observation $\bm{o}$ and Gaussian latent noise
$\bm{z}$ to an action chunk $\bm{a}$:
\begin{equation}
  \label{eq:onestep_decoder}
  \bm{a}=G_\theta(\bm{o},\bm{z}),
  \qquad
  \bm{z}\sim\mathcal{N}(\bm{0},\bm{I}),
\end{equation}
where \emph{one-step} means that one action chunk is produced by a single
evaluation of $G_\theta$.
Multi-step diffusion or flow-matching samplers are also differentiable, but
optimizing them through an action-space objective requires unrolling the
sampling chain; Eq.~\eqref{eq:onestep_decoder} instead exposes a direct,
unroll-free gradient path from the action-space loss back to the latent
noise.
The decoder $G_\theta$ is \emph{frozen} throughout post-training.

We are given an offline dataset $\mathcal{D}$ of mixed-quality
demonstrations, each providing transitions $(\bm{o},\bm{a},r,\bm{o}')$; the
critic below uses full transitions, whereas the policy loss uses only the
observation--action pairs $(\bm{o},\bm{a}_{\mathrm{demo}})$.
The reward is the same binary task-success reward used for evaluation: it is
emitted when the task first reaches its success condition, and the episode
terminates on success, so with discount $\gamma<1$ earlier success yields a
larger return.
These rewards are used only to fit the offline critic of
Section~\ref{sec:method_advantage}.
AWLS learns only a residual noise shaper $T_\phi$ that maps base noise
$\bm{\epsilon}\sim\mathcal{N}(\bm{0},\bm{I})$ to the decoder input
$\bm{z}=T_\phi(\bm{o},\bm{\epsilon})$, so that
$\bm{a}=G_\theta(\bm{o},T_\phi(\bm{o},\bm{\epsilon}))$ preferentially
reconstructs higher-quality demonstrated actions.
For a demonstrated pair $(\bm{o},\bm{a}_{\mathrm{demo}})$ we define the
noise-averaged mean squared error (MSE)
\begin{equation}
  \label{eq:awls_reconstruction}
  \ell^{\phi}_{\mathrm{MSE}}(\bm{o},\bm{a}_{\mathrm{demo}})
  = \mathbb{E}_{\bm{\epsilon}\sim\mathcal{N}(\bm{0},\bm{I})}\!\left[
      \big\|G_\theta\big(\bm{o},T_\phi(\bm{o},\bm{\epsilon})\big)
        -\bm{a}_{\mathrm{demo}}\big\|_2^2
    \right],
\end{equation}
which depends on $\phi$ only through $T_\phi$.
The identity shaper $T_\phi=\mathrm{id}$ recovers the pretrained BC policy;
Eq.~\eqref{eq:awls_reconstruction} defines the action reconstruction loss
used during post-training.

\subsection{Advantage-Weighted Latent-Steering Objective}
\label{sec:method_arch}
\label{sec:method_advantage}

Motivated by advantage-weighted regression (AWR)~\cite{peng2019awr}, AWLS
weights the reconstruction loss by exponentiated advantages estimated offline
via IQL~\cite{kostrikov2022iql} (Appendix~\ref{app:critic}).
A pessimistic critic ensemble gives
$Q_{\mathrm{pess}}(\bm{o},\bm{a})=\min_k Q_{\psi_k}(\bm{o},\bm{a})$, an
expectile-trained value function $V_\nu$ provides the baseline, and the
advantage is the detached difference
\begin{equation}
  \label{eq:awls_advantage}
  A(\bm{o},\bm{a})
  = \operatorname{sg}\!\big[Q_{\mathrm{pess}}(\bm{o},\bm{a})-V_\nu(\bm{o})\big],
\end{equation}
where $\operatorname{sg}[\cdot]$ denotes stop-gradient.
The raw AWR weight has the form
$w_{\mathrm{raw}}(\bm{o},\bm{a}) = \exp(\beta A(\bm{o},\bm{a}))$.
Likelihood-based AWR fits the reweighted behavior distribution
$\pi^*(\bm{a}\mid\bm{o})\propto
 \pi_{\mathcal{D}}(\bm{a}\mid\bm{o})\exp(\beta A(\bm{o},\bm{a}))$,
which assigns greater probability to higher-advantage demonstrated actions.
AWLS adopts this weighting principle, but the marginal log-density of its
composed policy is not directly tractable: even for dimension-matched
invertible maps, evaluation requires an inverse mapping and the Jacobian
log-determinants of both $G_\theta$ and
$T_\phi$~\cite{grathwohl2019ffjord}.
We therefore use the action reconstruction loss of
Eq.~\eqref{eq:awls_reconstruction} as an AWR-inspired supervised surrogate:%
\begin{equation}
  \label{eq:awr_loss}
  \mathcal{L}_{\text{AWLS}}(\phi)
    = \mathbb{E}_{(\bm{o},\bm{a}_{\text{demo}})\sim\mathcal{D}}\!\Big[
        w(\bm{o}, \bm{a}_{\text{demo}})
        \;\ell_{\mathrm{MSE}}^\phi(\bm{o},\bm{a}_{\text{demo}})
      \Big],
\end{equation}
where the weight
$w(\bm{o},\bm{a})=\exp(\beta\,\tilde{A}(\bm{o},\bm{a}))$ uses a
scale-normalized advantage $\tilde{A}\in[0,1]$ defined by
Eq.~\eqref{eq:awls_norm} in Section~\ref{sec:method_weighting}.
The critic affects training only through the detached weight
$w(\bm{o},\bm{a}_{\text{demo}})$: advantages, batch statistics, and weights
carry no gradient, so AWLS neither backpropagates through $Q_{\mathrm{pess}}$
nor uses an action-direction gradient $\nabla_{\bm{a}}Q_{\mathrm{pess}}$,
and the only gradient flows from the MSE loss through the frozen decoder into
$T_\phi$.

This objective anchors the training targets to demonstrated actions, but
does not establish equivalence to likelihood-based AWR or guarantee that
generated actions remain within the demonstration support.
Because independent noise samples are regressed toward demonstrated targets,
reconstruction can also reduce conditional action diversity
(Appendix~\ref{app:surrogate}).
To bias adaptation toward the pretrained latent input, AWLS uses the residual
form
\begin{equation}
  \label{eq:residual_shaper}
  T_\phi(\bm{o},\bm{\epsilon})
    = \bm{\epsilon} + \Delta_\phi(\bm{o},\bm{\epsilon}),
\end{equation}
where $\Delta_\phi$ is a small MLP conditioned on the observation encoding
and $\bm{\epsilon}$ (Appendix~\ref{app:impl_shaper}), with its output
initialized near zero, so post-training starts near the identity shaper.
If $\|\partial\Delta_\phi/\partial\bm{\epsilon}\|_{\mathrm{op}}\leq L_v<1$
at a state--noise pair, the shaper Jacobian is nonsingular and its singular
values lie in $[1-L_v,1+L_v]$ (Prop.~\ref{prop:surrogate}).
This controls local geometry, rather than the MSE--likelihood discrepancy.
The condition is not enforced during training; measurements at the evaluated
checkpoints show modest corrections and nonsingular local Jacobians, while
the residual shaper retains more action diversity than the tested direct and
Gaussian alternatives (Appendices~\ref{app:surrogate}
and~\ref{app:impl_shaper}).

\subsection{Scale-Normalized Advantage Weighting}
\label{sec:method_weighting}

A practical fragility of the raw AWR weight
$w_{\mathrm{raw}}=\exp(\beta A)$ is that
the effective sharpness $\beta\,A$ depends on the magnitude of the advantage,
which varies widely across tasks: different horizon lengths, reward
densities, and return magnitudes produce different advantage ranges even
under the same reward convention.
A fixed $\beta$ therefore has no consistent meaning: on a task with small
advantages the weights become nearly uniform, while on a task with large
advantages they overflow \texttt{float32} or collapse the effective sample
size (ESS) of the batch toward~$1$, concentrating all weight on a single
sample.
To decouple the inverse temperature $\beta$ from this task-specific advantage
scale, we normalize the advantage to the unit
interval before exponentiation.  Given a minibatch $\mathcal{B}$, define
\begin{equation}
  \label{eq:awls_norm}
  \tilde{A}(\bm{o}, \bm{a})
    = \frac{A(\bm{o},\bm{a}) - A_{\min}^{\mathcal{B}}}
           {A_{\max}^{\mathcal{B}} - A_{\min}^{\mathcal{B}} + \varepsilon},
  \qquad
  \tilde{A} \in [0, 1],
\end{equation}
where $A_{\min}^{\mathcal{B}}$ and $A_{\max}^{\mathcal{B}}$ are the batch
extrema and $\varepsilon > 0$ is a small constant for numerical
stability.
Note that both $\tilde{A}$ and $w$ depend on the current minibatch; we
suppress this dependence for brevity.
Substituting this normalized advantage into the weight $w$ of
Eq.~\eqref{eq:awr_loss} bounds the weight: since $\tilde{A} \in [0,1]$,
$w \in [1, e^\beta]$ regardless of the reward scale, advantage range, or
critic magnitude.
In contrast, the raw AWR weight $w_{\mathrm{raw}}$ produces a
within-batch weight ratio of
$e^{\beta(A_{\max}^{\mathcal{B}}-A_{\min}^{\mathcal{B}})}$, which grows
exponentially with the advantage range.

Equivalently, per-batch normalization applies an adaptive inverse temperature
$\beta_t = \beta / (\Delta_t + \varepsilon)$ to min-shifted advantages, where
$\Delta_t = A_{\max}^{\mathcal{B}_t} - A_{\min}^{\mathcal{B}_t}$.
We use $\beta=3$ for all tasks (Section~\ref{sec:ablations}).

\begin{proposition}[Bounded weight concentration]
  \label{prop:ess}
  For any minibatch of size $n$, the AWLS weights satisfy
  $w_{\max}/w_{\min} \leq R$ with $R = e^\beta$, and the effective sample
  size is lower-bounded by
  $\mathrm{ESS}
    = (\sum_i w_i)^2 / \sum_i w_i^2
    \geq 4nR/(1+R)^2$.
  For $\beta=3$ and $n=256$, this gives $\mathrm{ESS} \geq 46.3$.
  The bound holds at every training step regardless of the critic state.
\end{proposition}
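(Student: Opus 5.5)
The argument has three parts: bound the weight ratio, reduce the ESS bound to a Kantorovich-type inequality, and check the numbers.

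\textbf{Weight ratio.} By Eq.~\eqref{eq:awls_norm}, every normalized advantage satisfies $\tilde{A}_i\in[0,1]$. This holds because the numerator $A_i-A_{\min}^{\mathcal{B}}$ lies in $[0,\Delta]$ and the denominator is $\Delta+\varepsilon>\Delta\ge 0$. Hence each weight $w_i=\exp(\beta\tilde{A}_i)$ lies in $[1,e^\beta]$. It follows that
\[
w_{\max}/w_{\min}\le e^\beta = R .
\]
This uses only the batch extrema, not the critic values themselves, which gives the ``regardless of the critic state'' clause. The $\varepsilon$ also handles the degenerate case $\Delta=0$: then all $\tilde{A}_i=0$, so all $w_i=1$ and $\mathrm{ESS}=n$.

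\textbf{ESS bound.} I will prove the general fact that any positive weights with $m\le w_i\le M$ and $M/m\le R$ satisfy $(\sum_i w_i)^2/\sum_i w_i^2\ge 4nR/(1+R)^2$. This is the Kantorovich (or Pólya--Szegő) inequality. The cleanest route goes through a pointwise quadratic bound.

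For each $i$, $(w_i-m)(M-w_i)\ge 0$, which rearranges to
\[
w_i^2\le (m+M)w_i - mM .
\]
Summing over $i$ and writing $S=\sum_i w_i$ gives $\sum_i w_i^2\le (m+M)S-nmM$. Therefore
\[
\mathrm{ESS}\ge \frac{S^2}{(m+M)S-nmM}.
\]
I then minimize the right-hand side over $S$. Equivalently, maximize $\big((m+M)S-nmM\big)/S^2$ as a function of $1/S$; it is maximized at $S=2nmM/(m+M)$. Substituting gives
\[
\mathrm{ESS}\ge \frac{4nmM}{(m+M)^2}=\frac{4nR'}{(1+R')^2},\qquad R'=M/m .
\]
The function $x\mapsto 4x/(1+x)^2$ is decreasing for $x\ge1$, so $R'\le R$ upgrades this to the stated bound with $R$. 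I will take $m=w_{\min}$ and $M=w_{\max}$, or simply $m=1$ and $M=e^\beta$.

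\textbf{Numerics.} For $\beta=3$, $R=e^3\approx 20.09$. Then $4R/(1+R)^2\approx 80.34/444.6\approx 0.1807$, and multiplying by $n=256$ gives $\approx 46.3$.

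\textbf{Main obstacle.} The substantive step is the Kantorovich-type minimization over $S$. The only care needed is the sign condition: the denominator $(m+M)S-nmM$ must be positive. It is, because it bounds $\sum_i w_i^2>0$ from above. The rest is bookkeeping.
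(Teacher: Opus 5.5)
Your proof is correct. Your weight-ratio step matches the paper's Step~1, but you reach the ESS bound by a genuinely different route.

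The paper argues extremally. It notes that the sign of $\partial\,\mathrm{ESS}/\partial w_j$ is that of $S_2-w_jS_1$, which decreases in $w_j$. So ESS is unimodal in each coordinate, and its minimum over the box $[1,R]^n$ is attained at a two-point configuration ($k$ weights at $R$, $n-k$ at $1$). It then minimizes $n(1+p(R-1))^2/(1+p(R^2-1))$ over a relaxed fraction $p\in[0,1]$ and finds $p^\ast=1/(R+1)$.

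You instead sum the pointwise certificate $(w_i-m)(M-w_i)\ge 0$ and minimize a single rational function of $S$. This is a direct proof of the Kantorovich/P\'olya--Szeg\H{o} inequality. Your version is shorter and purely algebraic:
\begin{itemize}
\item it needs no existence-of-minimizer or coordinatewise-reduction argument, which the paper's Step~2 states only tersely;
\item it needs no integer-versus-continuous relaxation;
\item it depends only on $w_{\max}/w_{\min}$, so it applies verbatim to the mean-normalized weights used in the implementation, with no separate scale-invariance remark.
\end{itemize}

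The paper's version, in exchange, exhibits the worst-case batch explicitly, and its tightness remark rests on that. You can recover the same information from your equality conditions. Equality forces every $w_i\in\{m,M\}$ together with $S=2nmM/(m+M)$, which means exactly a fraction $m/(m+M)=1/(1+R')$ of the weights sit at $M$. That is the paper's $p^\ast$.
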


\noindent
The bound controls worst-case weight concentration only; it does not
guarantee that the critic ranking is accurate or informative.
The proof is in Appendix~\ref{app:ess}.

\subsection{Frozen Base Checkpoint Selection}
\label{sec:method_checkpoint}

Latent steering can change the decoded action only through the local
noise--action Jacobian
$J_\theta(\bm{o},\bm{z})=\partial G_\theta(\bm{o},\bm{z})/\partial\bm{z}$.
For a sampled $\bm{\epsilon}$ with $\bm{z}=T_\phi(\bm{o},\bm{\epsilon})$ and
$\hat{\bm{a}}=G_\theta(\bm{o},\bm{z})$, the gradient of the squared error
$\ell=\|\hat{\bm{a}}-\bm{a}_{\mathrm{demo}}\|_2^2$ that reaches the shaper is
the action-space gradient pulled back through the decoder,
$\nabla_\phi \ell
 = (\partial T_\phi/\partial\phi)^{\top} J_\theta(\bm{o},\bm{z})^{\top}
   \nabla_{\hat{\bm{a}}}\ell$.
When $J_\theta$ has low gain or is concentrated in a few directions, this
gradient provides little leverage for a latent-only update.
We observe that such loss of latent sensitivity can occur late in BC training
even after task performance has plateaued (Appendix~\ref{app:ckpt}), so the
final BC checkpoint is not necessarily the best frozen base for latent
steering.
We therefore score the checkpoints $\{\theta_k\}$ saved during BC training
with two rollout-free diagnostics computed on held-out demonstrations.
\emph{Best-of-$M$ reconstruction error} $L_{\text{BoM}}$ proxies BC quality
by the best action-space match over $M$ latent-noise samples.
\emph{Directional sensitivity} $S_J$ is a finite-difference estimate of the
gain of $J_\theta$: the decoded action change under small random
latent-noise perturbations.
Among checkpoints whose BC quality is within a tolerance $\delta$ of the
final checkpoint ($\delta=0.2$ throughout), we select the one with the
highest decoder sensitivity:
\begin{equation}
  \label{eq:ckpt_select}
  \theta^*
    = \arg\max_{\theta_k\in\mathcal{C}_\delta} S_J(\theta_k),
  \qquad
  \mathcal{C}_\delta
    = \big\{\theta_k :
        L_{\text{BoM}}(\theta_k)
          \leq (1+\delta)\,L_{\text{BoM}}(\theta_{\text{final}})
      \big\}.
\end{equation}
Formal definitions and per-task diagnostics are provided in
Appendix~\ref{app:ckpt}.

\paragraph{Optimization and deployment.}
Each post-training step samples a demonstration minibatch, computes detached
IQL advantages and their normalized weights, draws one independent base-noise
sample per pair as a single-sample estimate of
Eq.~\eqref{eq:awls_reconstruction}, and takes a gradient step on
Eq.~\eqref{eq:awr_loss} with respect to $\phi$ only.
At deployment the critic and advantage computation are removed: AWLS samples
$\bm{\epsilon}$, evaluates $T_\phi(\bm{o},\bm{\epsilon})$, and performs one
decoder evaluation, so inference remains a single generative step without
iterative denoising or candidate selection.

\section{Experiments}
\label{sec:experiments}

\subsection{Simulation Experiments}
\label{sec:exp_benchmark}

\paragraph{Setup.}
We evaluate AWLS on Adroit~\cite{rajeswaran2018learning},
DexArt~\cite{bao2023dexart}, and RoboTwin~\cite{mu2025robotwin}, covering 12
simulation tasks with single-hand dexterity, articulated-object interaction,
and long-horizon bimanual manipulation.
Demonstrations are generated by simulation RL expert policies.
Although these experts achieve high success, their rollouts still vary in
trajectory length, smoothness, contact timing, and hesitation, so simulation
serves as a controlled testbed for the within-success quality variation that
makes real-world teleoperated data difficult.
We use 10 demonstrations per Adroit task, 100 per DexArt task, and 50 per
RoboTwin task.
All simulation experiments are run on a single NVIDIA RTX 3090 GPU.

\paragraph{Baselines.}
We compare with behavior-cloning policies
DP3~\cite{ze2024dp3}, FlowPolicy~\cite{zhang2025flowpolicy}, and
MeanFlow~\cite{geng2025meanflow}, as well as the Q-maximization post-training
methods FQL~\cite{fql2025} and DSRL~\cite{dsrl2025}, built on FlowPolicy and
DP3 backbones, respectively.
FP-AWLS applies our post-training procedure to a FlowPolicy-style one-step
decoder, while MF-AWLS applies it to a MeanFlow decoder.
All methods use the same offline demonstrations.

\paragraph{Post-training setup.}
We first train each one-step decoder by behavior cloning, for 3000 epochs on
Adroit and DexArt and 1000 epochs on RoboTwin (per-benchmark settings in
Appendix~\ref{app:impl_decoder}).
FP-AWLS and FQL start from the same pretrained FlowPolicy decoder, so their
comparison isolates the post-training procedure.
Both AWLS variants select their frozen base from the BC checkpoints with the
rollout-free $S_J$ criterion of Section~\ref{sec:method_checkpoint}
(Eq.~\eqref{eq:ckpt_select}); AWLS then freezes the selected decoder and
updates only the residual noise shaper~$T_\phi$.
The offline critic is trained on the same demonstrations.
Each method is trained across three random seeds, and every reported
simulation success rate is the mean over $20$ evaluation episodes per task
per seed.

\begin{table}[!t]
{\nolinenumbers
\begin{center}
\begingroup
\scriptsize
\setlength{\tabcolsep}{2.7pt}
\begin{tabular}{ccccccc}
\toprule
Paradigm & Method & Denoising Steps & Adroit (3 tasks) & DexArt (4 tasks) & RoboTwin (5 tasks) & Average \\
\midrule
\raisebox{-1.15\normalbaselineskip}[0pt][0pt]{\shortstack{Behavior\\Cloning}}
  & DP3 & 10 & 67.5\(\pm\)3.3 & 50.3\(\pm\)1.2 & 45.0\(\pm\)3.8 & 52.4\(\pm\)2.8 \\
  & FlowPolicy & 1 & 71.3\(\pm\)2.0 & 50.4\(\pm\)2.2 & 48.3\(\pm\)4.2 & 54.8\(\pm\)3.0 \\
  & MeanFlow & 1 & 72.1\(\pm\)4.2 & 54.7\(\pm\)2.2 & 53.2\(\pm\)4.6 & 58.4\(\pm\)3.7 \\
\midrule
\raisebox{-0.55\normalbaselineskip}[0pt][0pt]{Q-maximization}
  & DSRL & 10 & 72.5\(\pm\)2.5 & 56.4\(\pm\)3.7 & 55.8\(\pm\)4.8 & 60.2\(\pm\)3.9 \\
  & FQL & 1 & 73.6\(\pm\)2.6 & 57.4\(\pm\)2.4 & 56.1\(\pm\)4.9 & 60.9\(\pm\)3.5 \\
\midrule
\raisebox{-0.55\normalbaselineskip}[0pt][0pt]{Ours} & FP-AWLS
  & 1
  & 74.8\(\pm\)2.6
  & 58.5\(\pm\)2.1
  & 57.6\(\pm\)4.5
  & 62.2\(\pm\)3.2 \\
  & MF-AWLS
  & 1
  & \textbf{76.1\(\pm\)3.5}
  & \textbf{62.2\(\pm\)1.3}
  & \textbf{61.6\(\pm\)3.7}
  & \textbf{65.4\(\pm\)2.9} \\
\bottomrule
\end{tabular}
\endgroup
\par\vspace{2pt}
\refstepcounter{table}\label{tab:summary}
\parbox{0.96\textwidth}{\small
  \textbf{Table~\thetable:} Cross-task average success rate (\%) by benchmark.
  Each entry averages the per-task success rates of the benchmark (means over
  three random seeds); the $\pm$ value is the average of the per-task standard
  deviations (per-task results in Appendix~\ref{app:full_sim_results}).}
\end{center}
\linenumbers}

\end{table}

\paragraph{Results.}
Table~\ref{tab:summary} shows that both AWLS variants improve over their
frozen one-step base policies on every simulation benchmark family; per-task
breakdowns in Appendix~\ref{app:full_sim_results} confirm the trend across
all 12 tasks.
Under the matched FlowPolicy backbone, the FP-AWLS margin over FQL is modest
($62.2$ vs.\ $60.9$ on average); FQL's behavior-regularization coefficient is
tuned per task following its official implementation, whereas AWLS uses a
single fixed $\beta=3$ for all tasks.
Our claim is therefore competitive performance under a simpler,
shared-configuration frozen-decoder post-training protocol rather than a large
margin over tuned Q-maximization baselines.

\subsection{Mixed-Quality Demonstrations}
\label{sec:exp_mixed}

We next evaluate robustness to mixed-quality demonstrations on DexArt.
Starting from the same RL expert demonstrations as in
Section~\ref{sec:exp_benchmark}, we inject
temporally correlated Ornstein--Uhlenbeck (OU) action noise into expert
rollouts to construct a low-quality demonstration set.
The noise level is chosen so that the expert rollout success rate drops from
near 100\% to below 20\%, while still producing trajectories that remain close
to the original task structure (Appendix~\ref{app:mixed_quality}).
We then mix clean expert demonstrations and noisy expert-derived
demonstrations at different ratios and train each method on the resulting
offline dataset.
We use the same success-rate evaluation protocol as in
Section~\ref{sec:exp_benchmark}.

\begin{figure}[!t]
\centering
\begin{minipage}[b]{0.48\textwidth}
  \centering
  \includegraphics[width=\textwidth]{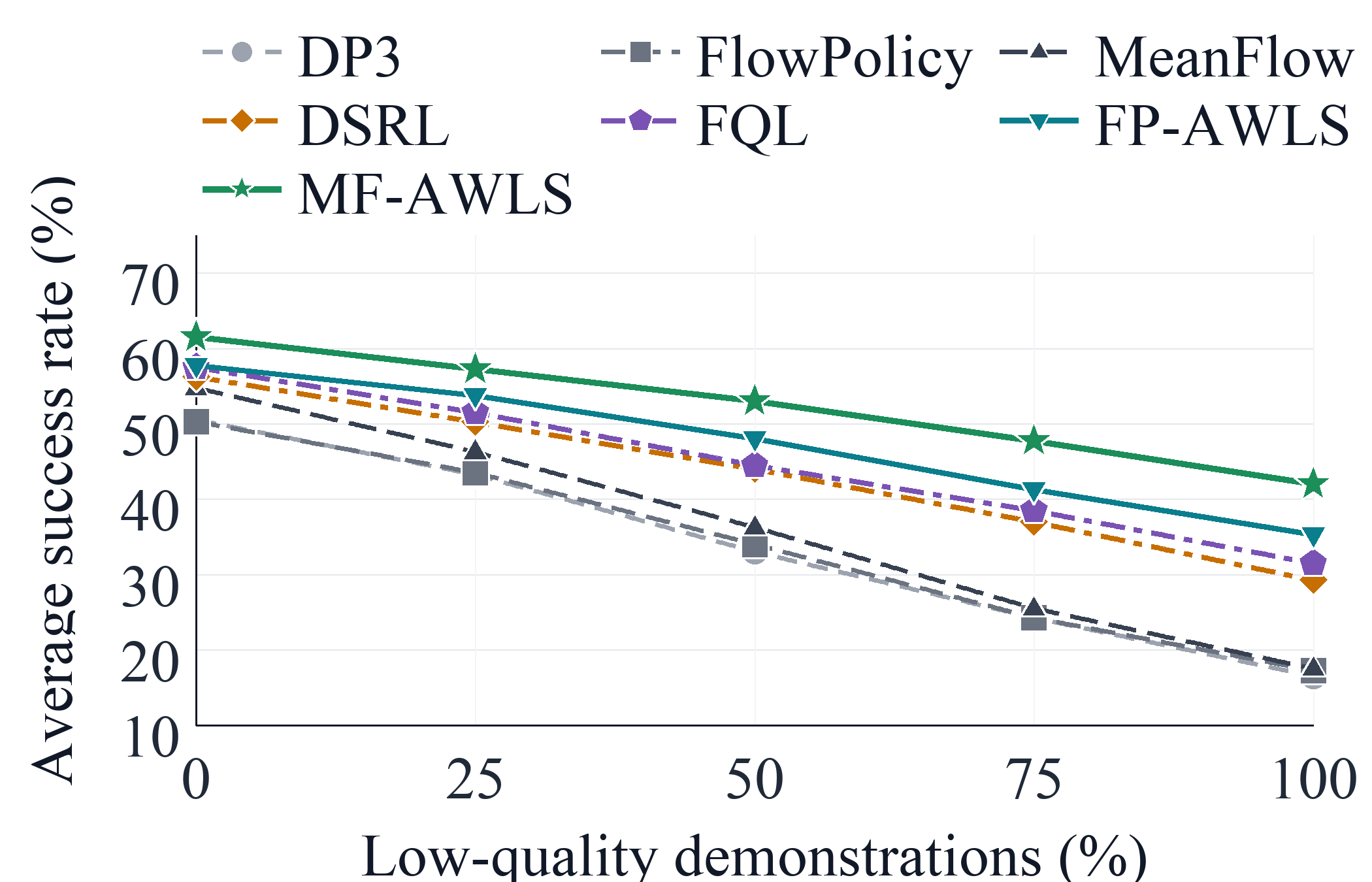}
\end{minipage}
\hfill
\begin{minipage}[b]{0.48\textwidth}
  \centering
  \includegraphics[width=\textwidth]{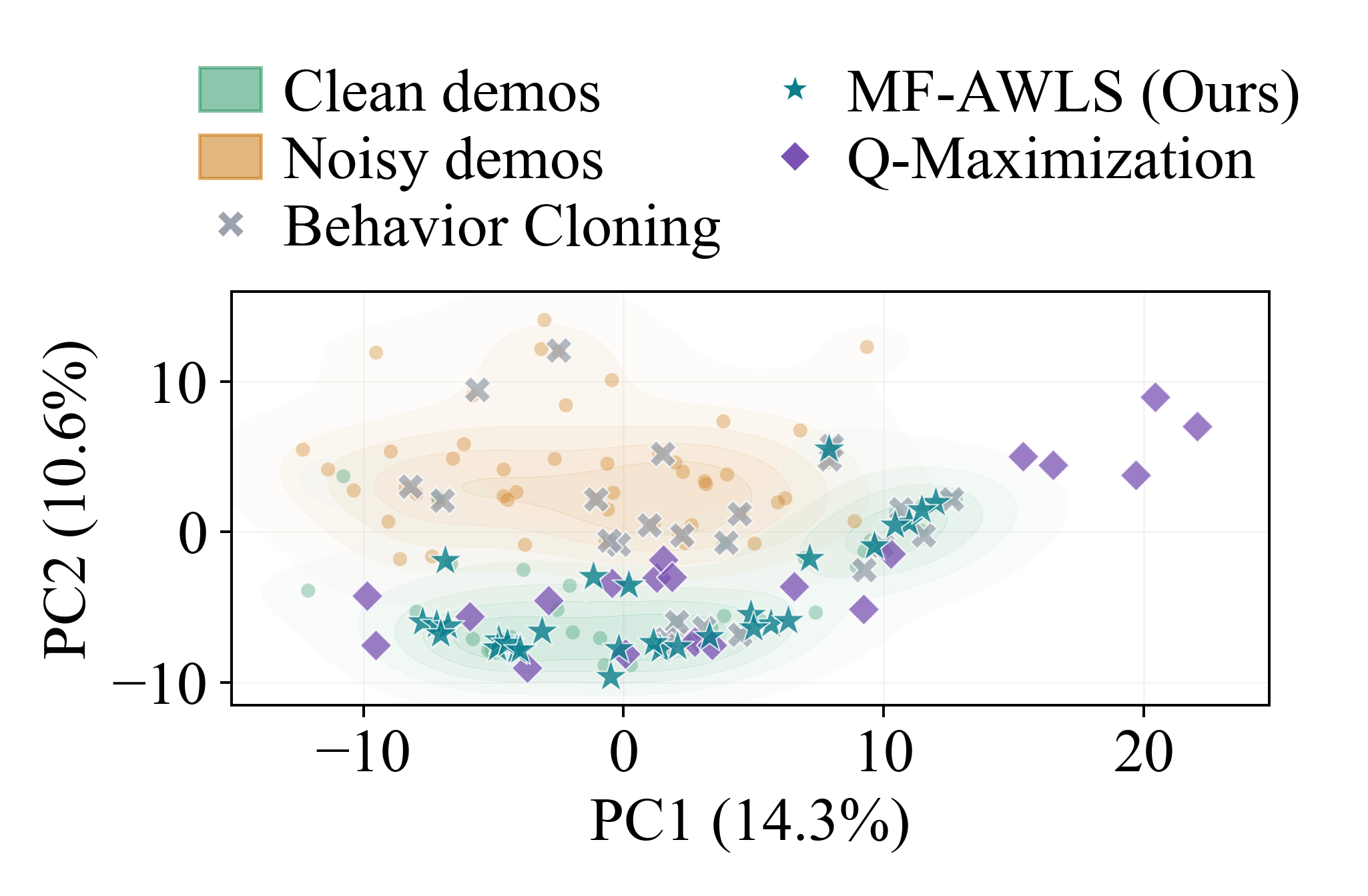}
\end{minipage}
\caption{\textbf{Left:} Average success rate on four DexArt tasks as the
  noisy-demonstration fraction increases.
  \textbf{Right:} PCA projection of flattened Bucket action trajectories
  with 50 clean and 50 noisy demonstrations.}
\label{fig:mixed_quality}
\end{figure}

As the noisy-demonstration fraction increases (Figure~\ref{fig:mixed_quality},
left), all methods degrade, but AWLS
maintains higher success than the behavior-cloning and Q-maximization
baselines.
The trajectory PCA (Figure~\ref{fig:mixed_quality}, right) further shows how the
methods use the mixed dataset.
Behavior cloning spreads generated trajectories across both clean and noisy
demonstration modes because it treats all demonstrations uniformly.
AWLS instead concentrates its trajectories near the clean-demonstration
cluster in this projection, consistent with a preference for higher-quality
demonstrated behavior in the mixed dataset.
Q-maximization also shifts many trajectories toward the clean mode, but a
nontrivial fraction lies outside both demonstration clusters in this projection,
consistent with critic-gradient steering inducing out-of-distribution drift.

\subsection{Real-World Experiments}
\label{sec:exp_realworld}

We further evaluate AWLS on real robots to test whether the same offline
post-training recipe transfers to policies trained from human teleoperated
demonstrations, the intended setting of AWLS.

\paragraph{Setup.}
For real-world experiments, the single-arm tasks use a UR5e manipulator,
whereas the dual-arm tasks use two UR3e manipulators.
The perception setup combines one third-view \mbox{Intel RealSense D455} camera
with wrist-mounted \mbox{Intel RealSense D405} cameras on each arm, providing a
shared scene view together with close-range hand-centric observations.
Policies consume these two RGB streams plus proprioception and output
end-effector pose-and-gripper commands in action chunks at $10$~Hz; the
offline critic uses the same binary task-success reward as the evaluation
criterion.
Hardware, observation encoding, and evaluation details are given in
Appendix~\ref{app:realworld}.

\paragraph{Tasks and demonstrations.}
Figure~\ref{fig:real_tasks} shows five real-world tasks: three single-arm
tasks and two bimanual tasks.
All methods are trained from the same offline teleoperated demonstrations:
each task uses $70$ demonstrations collected by SpaceMouse teleoperation, with
commands expressed in the robot base frame.
Even when demonstrations are successful, they vary in speed, contact timing,
pauses, corrections, and final alignment, creating the mixed-quality
conditions that advantage weighting is designed to address.

\begin{figure}[!t]
\centering
\begin{minipage}[b]{0.18\textwidth}
  \centering
  \includegraphics[width=\textwidth]{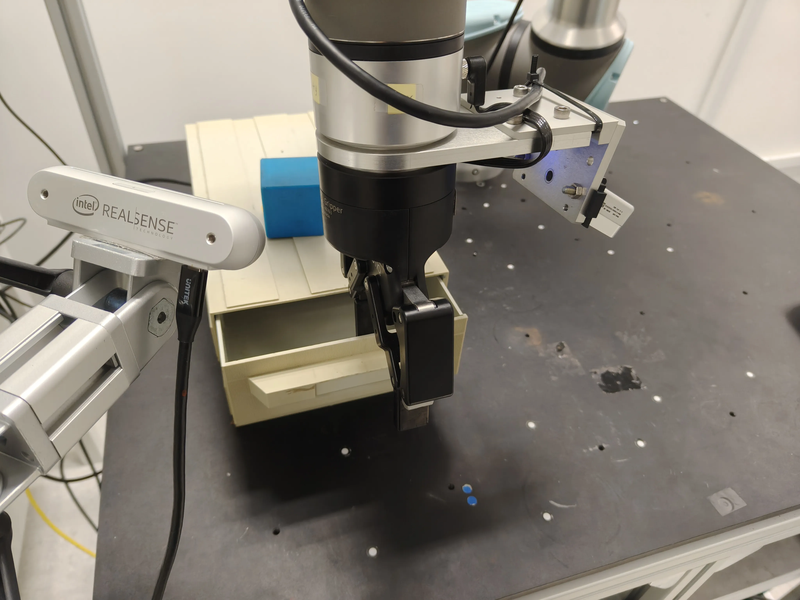}\\[2pt]
  \makebox[\linewidth][c]{\scriptsize Drawer-Cube}
\end{minipage}\hfill%
\begin{minipage}[b]{0.18\textwidth}
  \centering
  \includegraphics[width=\textwidth]{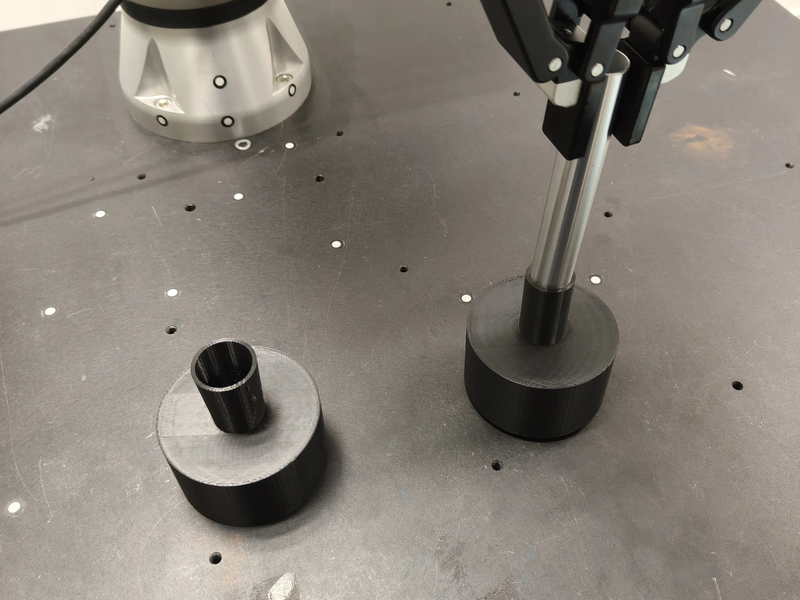}\\[2pt]
  \makebox[\linewidth][c]{\scriptsize Tube Insertion}
\end{minipage}\hfill%
\begin{minipage}[b]{0.18\textwidth}
  \centering
  \includegraphics[width=\textwidth]{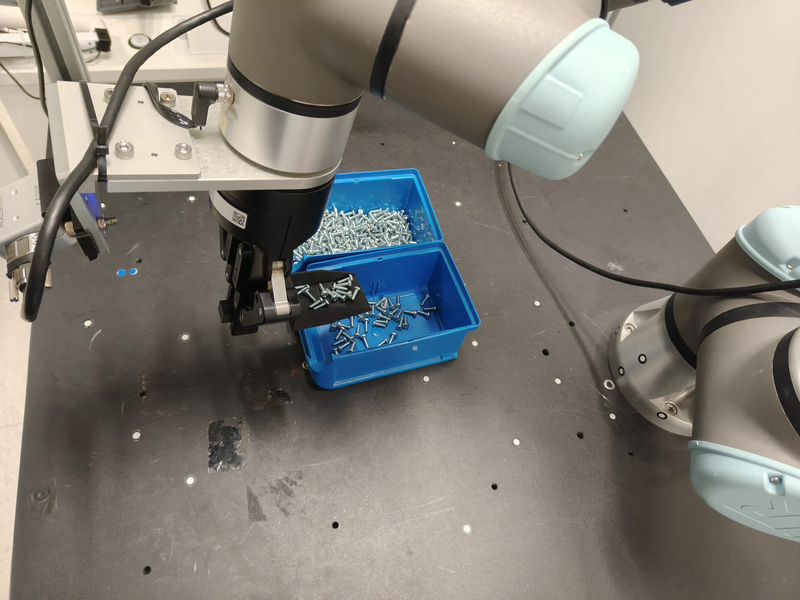}\\[2pt]
  \makebox[\linewidth][c]{\scriptsize Screw Scooping}
\end{minipage}\hfill%
\begin{minipage}[b]{0.18\textwidth}
  \centering
  \includegraphics[width=\textwidth]{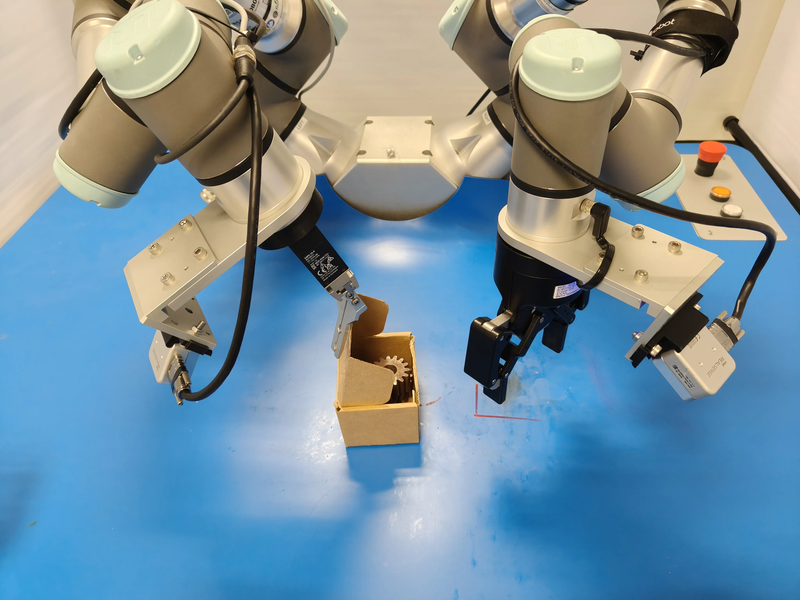}\\[2pt]
  \makebox[\linewidth][c]{\scriptsize Gear Packing}
\end{minipage}\hfill%
\begin{minipage}[b]{0.18\textwidth}
  \centering
  \includegraphics[width=\textwidth]{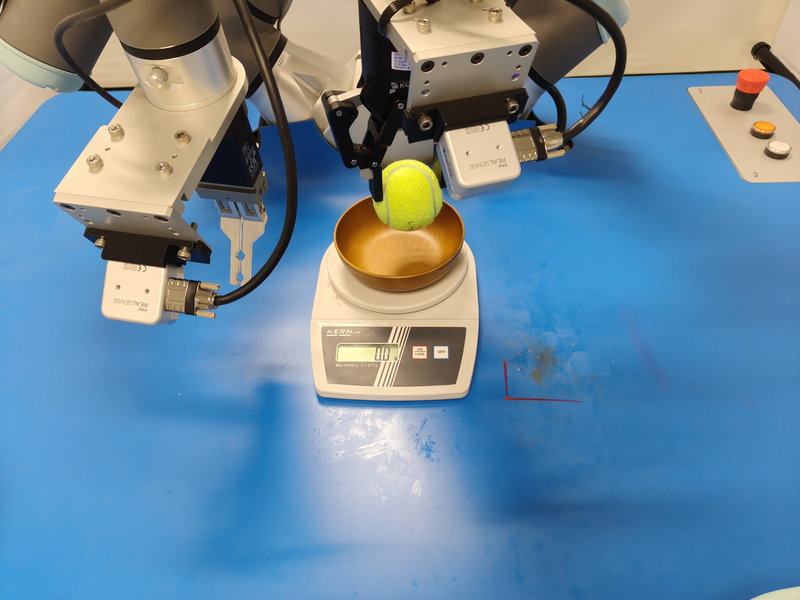}\\[2pt]
  \makebox[\linewidth][c]{\scriptsize Ball Weighing}
\end{minipage}
\caption{Real-world manipulation tasks. The first three are single-arm tasks:
  \textbf{Drawer-Cube} opens a drawer, places a cube, and closes the drawer;
  \textbf{Tube Insertion} removes a jar and inserts a tube;
  \textbf{Screw Scooping} scoops screws and pours them into a target container.
  The last two are dual-arm tasks:
  \textbf{Gear Packing} opens a box, places a gear, and closes the box;
  \textbf{Ball Weighing} presses the scale button with one arm while placing
  the ball on the scale with the other arm.}
\label{fig:real_tasks}
\end{figure}

\begin{table}[!t]
\centering
\setlength{\tabcolsep}{4.4pt}
\renewcommand{\arraystretch}{0.92}
\begin{tabular}{@{}lcccccc@{}}
\toprule
Method & Drawer-Cube & Tube Ins. & Screw Scoop. & Gear Pack. & Ball Weigh. & Total \\
\midrule
MeanFlow & 12/20 & 11/20 & 13/20 & 4/20 & 5/20 & 45/100 \\
FQL  & 13/20 & 14/20 & 15/20 & 5/20 & 4/20 & 51/100 \\
DSRL & 12/20 & 15/20 & 13/20 & 6/20 & 4/20 & 50/100 \\
Ours & \textbf{15/20} & \textbf{17/20} & \textbf{16/20} &
       \textbf{9/20} & \textbf{8/20} & \textbf{65/100} \\
\bottomrule
\end{tabular}
\caption{Real-world success counts over 20 hardware rollouts per task.
  MeanFlow is the frozen behavior-cloning base used by MF-AWLS.}
\label{tab:realworld}
\vspace{-1.4em}% tighten the gap to the text below (two stacked top floats)
\end{table}

\paragraph{Training and evaluation.}
We compare the MF-AWLS variant with its frozen MeanFlow behavior-cloning
base, FQL, and DSRL; no real-world rollouts are used during post-training.
Each policy is evaluated for 20 hardware rollouts per task from randomized but
feasible initial configurations.
A rollout is successful only if every task stage is completed without human
intervention.

\paragraph{Results.}
Table~\ref{tab:realworld} shows that AWLS improves over the frozen MeanFlow
base and obtains higher success counts than FQL and DSRL on all five real-world
tasks.
With $20$ rollouts per task the per-task intervals are wide; in a
task-stratified test with Holm correction, only the aggregate gain over the
frozen MeanFlow base ($65/100$ vs.\ $45/100$) is statistically significant,
while the margins over FQL and DSRL are suggestive
(Appendix~\ref{app:realworld}).
These results support the use of detached advantage weighting to improve
policies trained from successful human demonstrations while keeping the
pretrained decoder fixed.

\subsection{Ablation Studies}
\label{sec:ablations}

We ablate the design choices of AWLS on the four DexArt tasks with clean demonstrations, using a MeanFlow
backbone.
Table~\ref{tab:ablation} reports the four discrete comparisons (decoder
adaptation, noise-shaper parameterization, frozen-base checkpoint selection,
and advantage weighting); the continuous advantage-temperature sweep is
reported in Appendix~\ref{app:beta_sweep} (Figure~\ref{fig:beta}).

\begin{wraptable}{r}{0.5\textwidth}
  \vspace{-6pt}
  \centering
  \small
  \setlength{\tabcolsep}{6pt}
  \renewcommand{\arraystretch}{0.93}
\begin{tabular}{lc}
  \toprule
   & DexArt SR \\
  \midrule
  \textbf{MF-AWLS (ours)}            & $\mathbf{62.2}$ \\
  \midrule
  \multicolumn{2}{l}{\textit{(a) Decoder adaptation}}\\
  \quad BC (no post-training)        & $-6.7$ \\
  \quad Q-max steering               & $-6.7$ \\
  \quad Unfreeze decoder             & $-4.0$ \\
  \quad Unfreeze enc.\,+\,dec.       & $-9.7$ \\
  \addlinespace[3pt]
  \multicolumn{2}{l}{\textit{(b) Noise shaper}}\\
  \quad Direct $f_\phi$              & $-2.5$ \\
  \addlinespace[3pt]
  \multicolumn{2}{l}{\textit{(c) Checkpoint}}\\
  \quad BC-best                      & $-8.2$ \\
  \addlinespace[3pt]
  \multicolumn{2}{l}{\textit{(d) Advantage weighting}}\\
  \quad Uniform weights              & $-6.3$ \\
  \quad Shuffled advantages          & $-6.9$ \\
  \quad Top-$50\%$ filtering         & $-2.0$ \\
  \bottomrule
  \end{tabular}
  \caption{DexArt ablations (MeanFlow, clean demonstrations);
  compact $\Delta$SR values are relative to MF-AWLS ($62.2$). Variants were
  trained in separate batches under the same protocol.}
  \label{tab:ablation}
\end{wraptable}
\paragraph{Frozen vs.\ unfrozen decoder.}
In the decoder-adaptation block, keeping the decoder frozen and steering only
the latent noise performs best among the compared variants.
Unfreezing decoder parameters reduces the gain, and unfreezing both encoder and
decoder performs worse than the BC prior in this setting.
The Q-max steering ablation gives no improvement over BC, suggesting that
the observed gain comes from the advantage-weighted supervised interface rather
than latent steering alone.

\paragraph{Advantage temperature $\beta$.}
With the scale-normalized advantage $\tilde{A}\in[0,1]$, performance is
relatively stable across the tested $\beta$ values, so we use $\beta=3$ without
task-specific retuning.
With the raw advantage $A(\bm{o},\bm{a})$, performance is lower and degrades as $\beta$
grows, consistent with the unbounded advantage scale over-concentrating the
weights (Prop.~\ref{prop:ess}).

\paragraph{Frozen-base checkpoint selection.}
Selecting the frozen base by our rollout-free $S_J$ criterion improves over
selecting the best-BC-score checkpoint in the checkpoint sweep
(Table~\ref{tab:ablation}(c)); both rows apply the same AWLS post-training,
so the difference isolates the choice of frozen base.
This suggests that the highest-BC checkpoint is not always the most steerable
base, and that decoder sensitivity is a useful selection signal.

\paragraph{Isolating advantage weighting.}
Block~(d) keeps the selected frozen base and the shaper fixed and varies only
the weights (Appendix~\ref{app:ablation_weighting}).
Training the same shaper with uniform weights or with shuffled advantages
falls $6.3$ and $6.9$ points below AWLS, back to the level of the BC prior,
so the gain requires the learned advantage signal rather than the shaper or
the checkpoint alone.
Hard top-$50\%$ advantage filtering recovers most of the gain ($-2.0$) but
remains below soft reweighting, which retains all demonstrations and is less
sensitive to imperfect critic rankings.

\paragraph{Residual vs.\ direct noise shaper.}
The residual parameterization outperforms the direct map on clean DexArt (Table~\ref{tab:ablation}(b)), suggesting
that preserving an identity-centered latent interface is beneficial in
practice; the corresponding local geometry and retained action diversity are
examined in Appendix~\ref{app:surrogate}.

\section{Conclusion}
\label{sec:conclusion}

We propose AWLS, a decoder-preserving post-training method for one-step
generative robotic manipulation policies trained from mixed-quality
demonstrations.
AWLS freezes the pretrained decoder and learns a residual noise shaper
with detached, scale-normalized advantage weights, using the critic only to
emphasize higher-quality demonstrated actions rather than to directly optimize
actions.
Across simulation, controlled mixed-quality data, and real-world manipulation
tasks, AWLS improves over the frozen base policy and Q-maximization
baselines while preserving fast single-step inference and limiting drift from
the behavior-cloning prior.

\section{Limitation}
\label{sec:limitations}

AWLS is intended as a conservative post-training method: it can emphasize
better behaviors already present in the offline demonstrations and reachable
through the frozen decoder, but it is not designed to synthesize entirely new
skills absent from the dataset.
Its performance also depends on the offline critic providing a useful relative
ranking of demonstrated actions.
While detached and scale-normalized weighting reduces sensitivity to critic
scale and avoids critic-gradient policy updates, extending AWLS to broader task
families, more diverse visual settings, and less clearly ranked datasets
remains an important direction for future work.

\acknowledgments{This research is supported by National Robotics Programme
(NRP) 2.0 funding initiatives ``Domain-specific Robotics Foundation Models for
Manufacturing (DS-RFM)'' and ``Data Engine Minimum Viable Product (DEM)''.}

\bibliography{example}  % .bib

\begin{thebibliography}{44}
\providecommand{\natexlab}[1]{#1}
\providecommand{\url}[1]{\texttt{#1}}
\expandafter\ifx\csname urlstyle\endcsname\relax
  \providecommand{\doi}[1]{doi: #1}\else
  \providecommand{\doi}{doi: \begingroup \urlstyle{rm}\Url}\fi

\bibitem[Chi et~al.(2023)Chi, Feng, Du, Xu, Cousineau, Burchfiel, and
  Song]{chi2023diffusion}
C.~Chi, S.~Feng, Y.~Du, Z.~Xu, E.~Cousineau, B.~Burchfiel, and S.~Song.
\newblock Diffusion policy: Visuomotor policy learning via action diffusion.
\newblock In \emph{Robotics: Science and Systems}, 2023.

\bibitem[Ze et~al.(2024)Ze, Zhang, Zhang, Hu, Wang, and Xu]{ze2024dp3}
Y.~Ze, G.~Zhang, K.~Zhang, C.~Hu, M.~Wang, and H.~Xu.
\newblock {3D} diffusion policy: Generalizable visuomotor policy learning via
  simple {3D} representations.
\newblock In \emph{Robotics: Science and Systems}, 2024.

\bibitem[Zhao et~al.(2026)Zhao, Wang, Ma, Li, Tay, Ang, and Zhu]{zhao2026fdvla}
R.~Zhao, W.~Wang, Y.~Ma, X.~Li, F.~E.~H. Tay, M.~H. Ang, Jr., and H.~Zhu.
\newblock {FD-VLA}: Force-distilled vision-language-action model for
  contact-rich manipulation.
\newblock In \emph{IEEE International Conference on Robotics and Automation},
  2026.

\bibitem[Brown et~al.(2020)Brown, Goo, and Niekum]{brown2020better}
D.~S. Brown, W.~Goo, and S.~Niekum.
\newblock Better-than-demonstrator imitation learning via automatically-ranked
  demonstrations.
\newblock In \emph{Proceedings of the Conference on Robot Learning}, volume 100
  of \emph{Proceedings of Machine Learning Research}, pages 330--359. PMLR,
  2020.

\bibitem[Mandlekar et~al.(2021)Mandlekar, Xu, Wong, Nasiriany, Wang, Kulkarni,
  Fei-Fei, Savarese, Zhu, and Mart{\'\i}n-Mart{\'\i}n]{mandlekar2021what}
A.~Mandlekar, D.~Xu, J.~Wong, S.~Nasiriany, C.~Wang, R.~Kulkarni, L.~Fei-Fei,
  S.~Savarese, Y.~Zhu, and R.~Mart{\'\i}n-Mart{\'\i}n.
\newblock What matters in learning from offline human demonstrations for robot
  manipulation.
\newblock In \emph{Conference on Robot Learning}, 2021.

\bibitem[Janner et~al.(2022)Janner, Du, Tenenbaum, and
  Levine]{janner2022diffuser}
M.~Janner, Y.~Du, J.~B. Tenenbaum, and S.~Levine.
\newblock Planning with diffusion for flexible behavior synthesis.
\newblock In \emph{International Conference on Machine Learning}, 2022.

\bibitem[Ajay et~al.(2023)Ajay, Du, Gupta, Tenenbaum, Jaakkola, and
  Agrawal]{ajay2023decision}
A.~Ajay, Y.~Du, A.~Gupta, J.~Tenenbaum, T.~Jaakkola, and P.~Agrawal.
\newblock Is conditional generative modeling all you need for decision-making?
\newblock In \emph{International Conference on Learning Representations}, 2023.

\bibitem[Nakamoto et~al.(2024)Nakamoto, Mees, Kumar, and
  Levine]{nakamoto2024vgps}
M.~Nakamoto, O.~Mees, A.~Kumar, and S.~Levine.
\newblock Steering your generalists: Improving robotic foundation models via
  value guidance.
\newblock In \emph{Conference on Robot Learning}, 2024.

\bibitem[Ren et~al.(2025)Ren, Lidard, Ankile, Simeonov, Agrawal, Majumdar,
  Burchfiel, Dai, and Simchowitz]{ren2025dppo}
A.~Z. Ren, J.~Lidard, L.~L. Ankile, A.~Simeonov, P.~Agrawal, A.~Majumdar,
  B.~Burchfiel, H.~Dai, and M.~Simchowitz.
\newblock Diffusion policy policy optimization.
\newblock In \emph{International Conference on Learning Representations}, 2025.

\bibitem[Sobol~Mark et~al.(2025)Sobol~Mark, Gao, Sampaio, Srirama, Sharma,
  Finn, and Kumar]{sobolmark2024parl}
M.~Sobol~Mark, T.~Gao, G.~G. Sampaio, M.~K. Srirama, A.~Sharma, C.~Finn, and
  A.~Kumar.
\newblock Policy agnostic {RL}: Offline {RL} and online {RL} fine-tuning of any
  class and backbone.
\newblock In \emph{International Conference on Learning Representations}, 2025.

\bibitem[Wagenmaker et~al.(2025)Wagenmaker, Nakamoto, Zhang, Park, Yagoub,
  Nagabandi, Gupta, and Levine]{dsrl2025}
A.~Wagenmaker, M.~Nakamoto, Y.~Zhang, S.~Park, W.~Yagoub, A.~Nagabandi,
  A.~Gupta, and S.~Levine.
\newblock Steering your diffusion policy with latent space reinforcement
  learning.
\newblock \emph{arXiv preprint arXiv:2506.15799}, 2025.

\bibitem[Im et~al.(2026)Im, Kolobov, Fu, and Lee]{im2026lps}
H.~Im, A.~Kolobov, J.~Fu, and Y.~Lee.
\newblock Latent policy steering through one-step flow policies.
\newblock \emph{arXiv preprint arXiv:2603.05296}, 2026.

\bibitem[Patil et~al.(2026)Patil, Biza, Weng, Schmeckpeper, Thomason, Zhang,
  Walters, Gopalan, Castro, and Rosen]{patil2026goldenticket}
O.~Patil, O.~Biza, T.~Weng, K.~Schmeckpeper, W.~Thomason, X.~Zhang, R.~Walters,
  N.~Gopalan, S.~Castro, and E.~Rosen.
\newblock You've got a golden ticket: Improving generative robot policies with
  a single noise vector.
\newblock \emph{arXiv preprint arXiv:2603.15757}, 2026.

\bibitem[Lu et~al.(2026)Lu, Qin, Jiang, Wang, Zhang, Liang, Yang, Xu, and
  Zhao]{lu2026unifiednoise}
J.~Lu, X.~Qin, Y.~Jiang, K.~Wang, C.~Zhang, B.~Liang, J.~Yang, M.~Xu, and
  L.~Zhao.
\newblock Unified noise steering for efficient human-guided {VLA} adaptation.
\newblock \emph{arXiv preprint arXiv:2605.10821}, 2026.

\bibitem[Kostrikov et~al.(2022)Kostrikov, Nair, and Levine]{kostrikov2022iql}
I.~Kostrikov, A.~Nair, and S.~Levine.
\newblock Offline reinforcement learning with implicit {Q}-learning.
\newblock In \emph{International Conference on Learning Representations}, 2022.

\bibitem[Peng et~al.(2019)Peng, Kumar, Zhang, and Levine]{peng2019awr}
X.~B. Peng, A.~Kumar, G.~Zhang, and S.~Levine.
\newblock Advantage-weighted regression: Simple and scalable off-policy
  reinforcement learning.
\newblock \emph{arXiv preprint arXiv:1910.00177}, 2019.

\bibitem[Wang et~al.(2025)Wang, Zhao, Teo, Ang, and Zhu]{wang2025tocdp}
W.~Wang, R.~Zhao, T.~J. Teo, M.~H. Ang, Jr., and H.~Zhu.
\newblock Task-guided and object-centric conditioning for effective and
  adaptive diffusion policy.
\newblock In \emph{IEEE/RSJ International Conference on Intelligent Robots and
  Systems}, pages 16746--16753, 2025.
\newblock \doi{10.1109/IROS60139.2025.11247520}.

\bibitem[Wang et~al.(2026)Wang, Zhao, Teo, Ang, and Zhu]{wang2026dream}
W.~Wang, R.~Zhao, T.~J. Teo, M.~H. Ang, Jr., and H.~Zhu.
\newblock From dream to action: Hierarchical policy learning with {3D} world
  imagination for robotic manipulation.
\newblock In \emph{IEEE International Conference on Robotics and Automation},
  pages 1155--1162, 2026.

\bibitem[Yang et~al.(2025)Yang, Cheng, Zhu, Li, Tao, and
  Mao]{yang2025hfdteacher}
Z.~Yang, A.~Cheng, H.~Zhu, T.~Li, P.~Y. Tao, and K.~Mao.
\newblock {HFD-Teacher}: High-frequency depth distillation from depth
  foundation models for enhanced depth completion.
\newblock In \emph{Proceedings of the IEEE/CVF International Conference on
  Computer Vision}, pages 8994--9003, October 2025.

\bibitem[Prasad et~al.(2024)Prasad, Lin, Wu, Zhou, and
  Bohg]{prasad2024consistencypolicy}
A.~Prasad, K.~Lin, J.~Wu, L.~Zhou, and J.~Bohg.
\newblock Consistency policy: Accelerated visuomotor policies via consistency
  distillation.
\newblock In \emph{Proceedings of Robotics: Science and Systems}, Delft,
  Netherlands, July 2024.

\bibitem[Wang et~al.(2025)Wang, Li, Mandlekar, Xu, Fan, Narang, Fan, Zhu,
  Balaji, Zhou, Liu, and Zeng]{wang2025onedp}
Z.~Wang, Z.~Li, A.~Mandlekar, Z.~Xu, J.~Fan, Y.~Narang, L.~Fan, Y.~Zhu,
  Y.~Balaji, M.~Zhou, M.-Y. Liu, and Y.~Zeng.
\newblock One-step diffusion policy: Fast visuomotor policies via diffusion
  distillation.
\newblock In \emph{International Conference on Machine Learning}, 2025.

\bibitem[Wang et~al.(2023)Wang, Hunt, and Zhou]{wang2023diffusionql}
Z.~Wang, J.~J. Hunt, and M.~Zhou.
\newblock Diffusion policies as an expressive policy class for offline
  reinforcement learning.
\newblock In \emph{International Conference on Learning Representations}, 2023.

\bibitem[Hansen-Estruch et~al.(2023)Hansen-Estruch, Kostrikov, Janner, Kuba,
  and Levine]{hansenestruch2023idql}
P.~Hansen-Estruch, I.~Kostrikov, M.~Janner, J.~G. Kuba, and S.~Levine.
\newblock {IDQL}: Implicit {Q}-learning as an actor-critic method with
  diffusion policies.
\newblock \emph{arXiv preprint arXiv:2304.10573}, 2023.

\bibitem[Park et~al.(2025)Park, Li, and Levine]{fql2025}
S.~Park, Q.~Li, and S.~Levine.
\newblock Flow {Q}-learning.
\newblock In \emph{International Conference on Machine Learning}, 2025.

\bibitem[Zhang et~al.(2025)Zhang, Xue, Niu, Chen, Yang, Li, and
  Liu]{zhang2025gmpo}
J.~Zhang, R.~Xue, Y.~Niu, Y.~Chen, J.~Yang, H.~Li, and Y.~Liu.
\newblock Revisiting generative policies: A simpler reinforcement learning
  algorithmic perspective.
\newblock In \emph{International Conference on Learning Representations}, 2025.

\bibitem[McAllister et~al.(2026)McAllister, Ge, Yi, Kim, Weber, Choi, Feng, and
  Kanazawa]{mcallister2026fpo}
D.~McAllister, S.~Ge, B.~Yi, C.~M. Kim, E.~Weber, H.~Choi, H.~Feng, and
  A.~Kanazawa.
\newblock Flow matching policy gradients.
\newblock In \emph{International Conference on Learning Representations}, 2026.

\bibitem[Domingo-Enrich et~al.(2025)Domingo-Enrich, Drozdzal, Karrer, and
  Chen]{domingo2025adjoint}
C.~Domingo-Enrich, M.~Drozdzal, B.~Karrer, and R.~T.~Q. Chen.
\newblock Adjoint matching: Fine-tuning flow and diffusion generative models
  with memoryless stochastic optimal control.
\newblock In \emph{International Conference on Learning Representations}, 2025.
\newblock Spotlight.

\bibitem[Zheng et~al.(2026)Zheng, Chen, Ye, Wang, Zhang, Jiang, Su, Ermon, Zhu,
  and Liu]{zheng2026diffusionnft}
K.~Zheng, H.~Chen, H.~Ye, H.~Wang, Q.~Zhang, K.~Jiang, H.~Su, S.~Ermon, J.~Zhu,
  and M.-Y. Liu.
\newblock {DiffusionNFT}: Online diffusion reinforcement with forward process.
\newblock In \emph{International Conference on Learning Representations}, 2026.
\newblock Oral.

\bibitem[Kumar et~al.(2020)Kumar, Zhou, Tucker, and Levine]{kumar2020cql}
A.~Kumar, A.~Zhou, G.~Tucker, and S.~Levine.
\newblock Conservative {Q}-learning for offline reinforcement learning.
\newblock In \emph{Advances in Neural Information Processing Systems}, 2020.

\bibitem[Du and Song(2025)]{du2025dynaguide}
M.~Du and S.~Song.
\newblock {DynaGuide}: Steering diffusion policies with active dynamic
  guidance.
\newblock In \emph{Advances in Neural Information Processing Systems}, 2025.

\bibitem[Zhou et~al.(2021)Zhou, Bajracharya, and Held]{zhou2021plas}
W.~Zhou, S.~Bajracharya, and D.~Held.
\newblock {PLAS}: Latent action space for offline reinforcement learning.
\newblock In \emph{Proceedings of the Conference on Robot Learning}, 2021.

\bibitem[Chen et~al.(2022)Chen, Ghadirzadeh, Yu, Wang, Gao, Li, Liang, Finn,
  and Zhang]{chen2022lapo}
X.~Chen, A.~Ghadirzadeh, T.~Yu, J.~Wang, A.~Y. Gao, W.~Li, B.~Liang, C.~Finn,
  and C.~Zhang.
\newblock {LAPO}: Latent-variable advantage-weighted policy optimization for
  offline reinforcement learning.
\newblock In \emph{Advances in Neural Information Processing Systems}, 2022.

\bibitem[Ki et~al.(2025)Ki, Oh, Shim, and Lee]{ki2025pg}
D.~Ki, J.~Oh, S.-W. Shim, and B.-J. Lee.
\newblock Prior-guided diffusion planning for offline reinforcement learning.
\newblock In \emph{Advances in Neural Information Processing Systems}, 2025.

\bibitem[Nair et~al.(2020)Nair, Gupta, Dalal, and Levine]{nair2020awac}
A.~Nair, A.~Gupta, M.~Dalal, and S.~Levine.
\newblock {AWAC}: Accelerating online reinforcement learning with offline
  datasets.
\newblock \emph{arXiv preprint arXiv:2006.09359}, 2020.

\bibitem[Wang et~al.(2020)Wang, Novikov, Zolna, Merel, Springenberg, Reed,
  Shahriari, Siegel, Gulcehre, Heess, and de~Freitas]{wang2020crr}
Z.~Wang, A.~Novikov, K.~Zolna, J.~S. Merel, J.~T. Springenberg, S.~E. Reed,
  B.~Shahriari, N.~Siegel, C.~Gulcehre, N.~Heess, and N.~de~Freitas.
\newblock Critic regularized regression.
\newblock In \emph{Advances in Neural Information Processing Systems}, 2020.

\bibitem[Lipman et~al.(2023)Lipman, Chen, Ben-Hamu, Nickel, and
  Le]{lipman2023flow}
Y.~Lipman, R.~T.~Q. Chen, H.~Ben-Hamu, M.~Nickel, and M.~Le.
\newblock Flow matching for generative modeling.
\newblock In \emph{International Conference on Learning Representations}, 2023.

\bibitem[Geng et~al.(2025)Geng, Deng, Bai, Kolter, and He]{geng2025meanflow}
Z.~Geng, M.~Deng, X.~Bai, J.~Z. Kolter, and K.~He.
\newblock Mean flows for one-step generative modeling.
\newblock In \emph{Advances in Neural Information Processing Systems}, 2025.
\newblock Oral.

\bibitem[Sheng et~al.(2026)Sheng, Wang, Li, and Liu]{sheng2026mp1}
J.~Sheng, Z.~Wang, P.~Li, and M.~Liu.
\newblock {MP1}: {MeanFlow} tames policy learning in 1-step for robotic
  manipulation.
\newblock In \emph{AAAI Conference on Artificial Intelligence}, 2026.

\bibitem[Zhan et~al.(2026)Zhan, Tao, Wang, Wang, Chen, Li, Li, Tomizuka, and
  Li]{zhan2026mvp}
G.~Zhan, L.~Tao, P.~Wang, Y.~Wang, Y.~Chen, Y.~Li, H.~Li, M.~Tomizuka, and
  S.~E. Li.
\newblock Mean flow policy with instantaneous velocity constraint for one-step
  action generation.
\newblock In \emph{International Conference on Learning Representations}, 2026.
\newblock Oral.

\bibitem[Grathwohl et~al.(2019)Grathwohl, Chen, Bettencourt, Sutskever, and
  Duvenaud]{grathwohl2019ffjord}
W.~Grathwohl, R.~T.~Q. Chen, J.~Bettencourt, I.~Sutskever, and D.~Duvenaud.
\newblock {FFJORD}: Free-form continuous dynamics for scalable reversible
  generative models.
\newblock In \emph{International Conference on Learning Representations}, 2019.

\bibitem[Rajeswaran et~al.(2018)Rajeswaran, Kumar, Gupta, Vezzani, Schulman,
  Todorov, and Levine]{rajeswaran2018learning}
A.~Rajeswaran, V.~Kumar, A.~Gupta, G.~Vezzani, J.~Schulman, E.~Todorov, and
  S.~Levine.
\newblock Learning complex dexterous manipulation with deep reinforcement
  learning and demonstrations.
\newblock In \emph{Robotics: Science and Systems}, 2018.
\newblock \doi{10.15607/RSS.2018.XIV.049}.

\bibitem[Bao et~al.(2023)Bao, Xu, Qin, and Wang]{bao2023dexart}
C.~Bao, H.~Xu, Y.~Qin, and X.~Wang.
\newblock {DexArt}: Benchmarking generalizable dexterous manipulation with
  articulated objects.
\newblock In \emph{Proceedings of the IEEE/CVF Conference on Computer Vision
  and Pattern Recognition}, 2023.

\bibitem[Mu et~al.(2025)Mu, Chen, Chen, Peng, Lan, Gao, Liang, Yu, Zou, Xu,
  Lin, Xie, Ding, and Luo]{mu2025robotwin}
Y.~Mu, T.~Chen, Z.~Chen, S.~Peng, Z.~Lan, Z.~Gao, Z.~Liang, Q.~Yu, Y.~Zou,
  M.~Xu, L.~Lin, Z.~Xie, M.~Ding, and P.~Luo.
\newblock {RoboTwin}: Dual-arm robot benchmark with generative digital twins.
\newblock In \emph{Proceedings of the IEEE/CVF Conference on Computer Vision
  and Pattern Recognition}, 2025.

\bibitem[Zhang et~al.(2025)Zhang, Liu, Fan, Liu, Zeng, and
  Liu]{zhang2025flowpolicy}
Q.~Zhang, Z.~Liu, H.~Fan, G.~Liu, B.~Zeng, and S.~Liu.
\newblock {FlowPolicy}: Enabling fast and robust {3D} flow-based policy via
  consistency flow matching for robot manipulation.
\newblock In \emph{AAAI Conference on Artificial Intelligence}, volume~39,
  pages 14754--14762, 2025.

\end{thebibliography}

\clearpage
\appendix
\label{pg:app-first}%
\begin{center}
  {\Large\bfseries Appendix\par}
\end{center}
\vspace{4pt}

\section{One-Step Decoder Background}
\label{app:meanflow}

AWLS treats the one-step decoder $G_\theta$ as a frozen black box: the only
property the rest of the paper relies on is that $G_\theta(\bm{o},\bm{\epsilon})$
maps an observation $\bm{o}$ and a Gaussian latent
$\bm{\epsilon}\sim\mathcal{N}(0,I)$ (of the same shape as the action chunk) to an
executable action chunk in a single network evaluation.
Both flow-based backbones we use, MeanFlow and FlowPolicy, provide this
interface; we refer to the original papers for their training objectives.

\paragraph{A common one-step interface.}
Although MeanFlow and FlowPolicy differ in parameterization (interval-average velocity
$\bm{u}_\theta$ vs.\ instantaneous velocity $\bm{v}_\theta$) and training signal (an
exact JVP-based stop-gradient identity vs.\ a finite-difference EMA consistency
objective), they yield the \emph{same} interface for downstream use. In both cases the
trained network collapses, at its boundary evaluation (the full interval
$r=0,t=1$ for MeanFlow; $t=1$ for FlowPolicy), to a single frozen map
\begin{equation}
  \label{eq:onestep_interface}
  G_\theta:\;(\bm{o},\bm{\epsilon})\;\longmapsto\;\bm{x}_0,
  \qquad \bm{\epsilon}\sim\mathcal{N}(0,I),
\end{equation}
which is the interface of Eq.~\eqref{eq:onestep_decoder}; AWLS is therefore
agnostic to which backbone---MeanFlow or FlowPolicy---produced the decoder.

\section{Critic and Advantage Estimation}
\label{app:critic}

Advantages are estimated following IQL.
A 2-ensemble of Q-functions yields a pessimistic estimate
$Q_{\mathrm{pess}}(\bm{o},\bm{a})=\min_k Q_{\psi_k}(\bm{o},\bm{a})$.
The value baseline is trained with expectile regression on this pessimistic
estimate:
\begin{equation}
  \label{eq:iql_value}
  \mathcal{L}_V(\nu)
    = \mathbb{E}_{(\bm{o},\bm{a})\sim\mathcal{D}}\!\Big[
        \mathcal{L}_2^\tau\!\big(
          Q_{\mathrm{pess}}(\bm{o},\bm{a}) - V_\nu(\bm{o})
        \big)
      \Big],
  \qquad \tau > 0.5,
\end{equation}
where $\mathcal{L}_2^\tau(u)=|\tau-\mathbb{1}\{u<0\}|u^2$.
The detached advantage used by AWLS is
$A(\bm{o},\bm{a})=\operatorname{sg}[Q_{\mathrm{pess}}(\bm{o},\bm{a})-V_\nu(\bm{o})]$
(Eq.~\eqref{eq:awls_advantage}).
We use the goal-conditioned binary reward $r=\mathbb{1}[\,\text{task-success
predicate at step }t\,]$: the reward is $1$ on the transition that first
satisfies the per-task success predicate already used by the simulator to score
rollouts (the same predicate that defines the reported success rate---e.g.\
the object reaching its target pose/region for the DexArt and Adroit tasks, or
the ordered stage being completed for RoboTwin), and $0$ otherwise, with the
episode terminating ($d=1$) on success. Using the evaluation predicate as the
reward keeps the offline advantage aligned with the metric we report.
We train the Q-functions with Bellman regression---augmented with a
conservative Q-learning penalty ($\alpha_{\mathrm{CQL}}=0.5$)
that suppresses over-estimation on out-of-support noise---using target
$y = r + \gamma(1-d)\,V_{\bar{\nu}}(\bm{o}')$.

\paragraph{Does the critic rank clean demonstrations above noisy ones?}
The mixed-quality analysis in the main paper attributes AWLS's preference for
cleaner demonstrated behavior to the critic assigning larger advantage weights
to clean actions. We test this premise by measuring whether the offline
advantage separates clean from noisy demonstrated actions. On each of the
four $50/50$ clean/noisy mixed DexArt datasets we split the demonstrated pairs
$(\bm{o},\bm{a})$ into the clean subset and the Ornstein--Uhlenbeck--perturbed noisy
subset (per-task perturbation magnitude $\sigma$), score every pair with the detached
advantage $A$ (Eq.~\ref{eq:awls_advantage}) and its scale-normalized form
$\tilde{A}\in[0,1]$ (Eq.~\ref{eq:awls_norm}, the quantity entering the weight
$w=\exp(\beta\tilde{A})$ at $\beta=3$), and compare the two subsets in
Table~\ref{tab:adv_separation} and Figure~\ref{fig:adv_hist}.

Three observations support the claim. (i) On every task the critic
assigns higher advantage to clean than to noisy actions, so clean actions receive
$2.1$--$2.6\times$ ($\approx2.3\times$ on average) the weight of noisy ones---all
inside the bounded-weight regime $w\in[1,e^{\beta}]$ of Prop.~\ref{prop:ess}, so the
separation never collapses the effective sample size. (ii) Advantage is positively
correlated with the demonstrator's return on every task ($\rho(A,R)>0$), i.e.\ the
critic ranks demonstrated actions by realized quality rather than arbitrarily.
(iii) The separation is strong and roughly uniform across tasks (AUROC
$0.85$--$0.92$), as expected: although the per-task OU scale $\sigma$ spans
$0.15$--$0.6$, it is tuned per task so that every noisy pool reaches a similar low
quality (expert rollout success below $20\%$, by construction of the mixed-quality
datasets), so the clean/noisy quality gap---and hence the critic's ability to
separate the two---is comparable across tasks. The small $\sigma$ for
\textsc{Bucket} reflects that task's higher \emph{sensitivity} to action noise, not
a milder quality gap. The aggregate weighting is consistent with the residual
shaper outperforming the direct-interface variant on clean DexArt, where the
direct variant is $2.5$ points worse (Table~\ref{tab:ablation}(b)).

Two clarifications keep this evidence honest. The clean/noisy split is a
provenance label that the critic never observes---it is trained only on the
goal-conditioned success reward---so a high clean-vs-noisy AUROC is \emph{not}
circular: the critic \emph{recovers} a demonstration-quality distinction it was
never given. The correlation $\rho(A,R)$, in contrast, is partly definitional,
since the reward is the deployment success predicate; a positive $\rho$ therefore
confirms that the advantage faithfully encodes the reported metric rather than
establishing an independent notion of quality. The real-world study
(Appendix~\ref{app:realworld}) is intended to provide that independent axis, and
the $50/50$ mix used here is the most balanced (and hardest-to-disentangle)
setting.

\begin{table}[t]
  \centering
  \caption{\textbf{The IQL critic ranks clean demonstrations above noisy ones}
  (four $50/50$ clean/noisy mixed DexArt datasets, MeanFlow backbone; advantages
  detached). For each task we report the mean scale-normalized advantage
  $\tilde{A}\in[0,1]$ (mean$\pm$std over three seeds) on the clean vs.\ OU-perturbed noisy subset,
  the induced weight ratio
  $w_{\text{clean}}/w_{\text{noisy}}=\exp\!\big(\beta(\bar{\tilde{A}}_{\text{clean}}
  -\bar{\tilde{A}}_{\text{noisy}})\big)$ at $\beta=3$, the AUROC (empirical
  rank-order) of separating clean from noisy actions by advantage, and the Spearman
  correlation $\rho(A,R)$ between
  advantage and demonstrator return. Clean actions receive higher advantage on every
  task; the separation is strong and roughly uniform across tasks because each
  per-task $\sigma$ is tuned to drive its noisy pool to a similar low quality
  ($<20\%$ success), so $\sigma$ reflects task sensitivity to noise, not the quality
  gap. Raw $A$ is in return units (goal-conditioned,
  $\gamma=0.99$); $\tilde{A}$ is its per-task min--max normalization, so the $[0,1]$
  scale is the per-task advantage range by construction. The bottom row is the
  per-task mean of each column.}
  \label{tab:adv_separation}
  \begin{tabular}{lcccccc}
    \toprule
    Task & $\sigma_{\text{OU}}$ & $\tilde{A}_{\text{clean}}$ & $\tilde{A}_{\text{noisy}}$
      & $w_{\text{clean}}/w_{\text{noisy}}$ & AUROC & $\rho(A,R)$ \\
    \midrule
    Laptop & $0.4$  & $0.66\pm0.15$ & $0.39\pm0.17$ & $2.26$ & $0.89$ & $0.47$ \\
    Faucet & $0.6$  & $0.63\pm0.15$ & $0.39\pm0.17$ & $2.09$ & $0.86$ & $0.49$ \\
    Toilet & $0.6$  & $0.66\pm0.14$ & $0.35\pm0.17$ & $2.55$ & $0.92$ & $0.56$ \\
    Bucket & $0.15$ & $0.65\pm0.15$ & $0.40\pm0.18$ & $2.10$ & $0.85$ & $0.46$ \\
    \midrule
    DexArt avg. & -- & $0.65$ & $0.38$ & $2.25$ & $0.88$ & $0.49$ \\
    \bottomrule
  \end{tabular}
\end{table}

\begin{figure}[t]
  \centering
  \includegraphics[width=\linewidth]{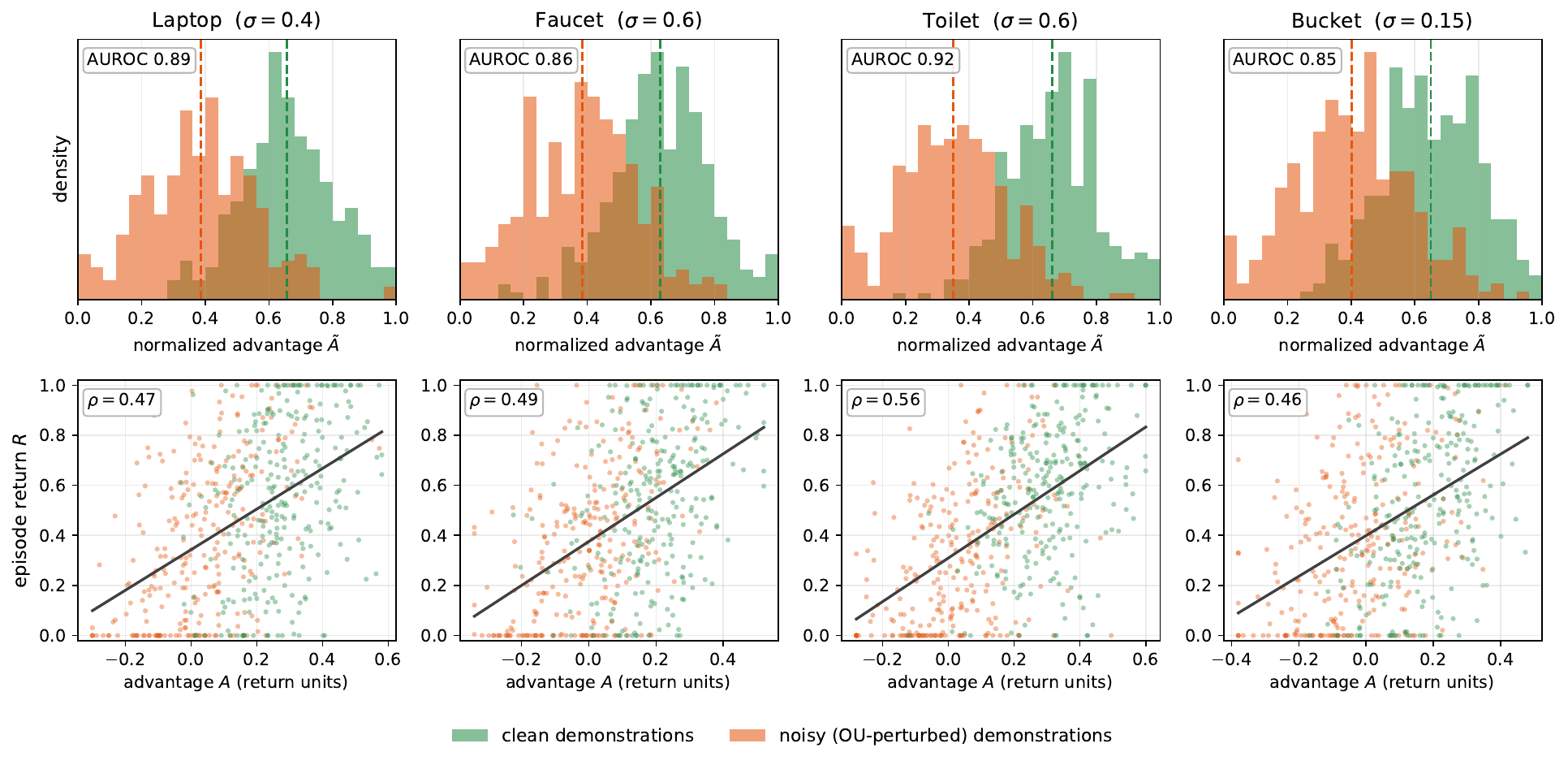}
  \caption{\textbf{The IQL critic ranks clean demonstrations above noisy ones}
  (four $50/50$ mixed DexArt datasets, MeanFlow backbone). \textbf{Top:} per-task
  histograms of the scale-normalized advantage $\tilde{A}\in[0,1]$ on clean (green)
  vs.\ OU-perturbed noisy (orange) demonstrated actions; dashed lines mark the
  subset means. Clean actions carry systematically higher advantage on every task,
  and advantage separates the two subsets with empirical rank-order AUROC
  $0.85$--$0.92$---even though clean/noisy provenance is a label the critic never
  observes during training. \textbf{Bottom:} the detached advantage $A$ (return
  units) vs.\ the demonstrator's episode return $R$, with the linear trend;
  advantage is positively correlated with return on every task (Spearman
  $\rho=0.46$--$0.56$). Summary statistics match Table~\ref{tab:adv_separation};
  histograms and scatters pool all demonstrated pairs over the three seeds.}
  \label{fig:adv_hist}
\end{figure}

\paragraph{Do advantages separate quality differences within successful
real-world demonstrations?}
The clean-vs-noisy separation above concerns synthetic corruption. To test
whether the binary-reward critic also captures local quality variation inside
successful human demonstrations, we reviewed every step of the $70$ real-world
Tube Insertion demonstrations (Appendix~\ref{app:realworld}), manually flagged
problematic segments, and assigned each flagged step to exactly one category:
\emph{detour}, \emph{hesitation}, or \emph{jerk/abrupt motion}. Within each
trajectory, steps whose advantage $A_t=A(\bm{o}_t,\bm{a}_t)$ falls below the
trajectory median form the $\mathrm{Low}$-$A$ set (fixed tie-breaking).
Table~\ref{tab:adv_alignment} reports, for each category, the share of its
steps that fall in $\mathrm{Low}$-$A$; a share above the $50\%$ chance level
indicates segments the critic identifies as lower quality. Detours ($64.0\%$)
and hesitation ($71.3\%$) are strongly enriched, whereas jerk/abrupt motion
stays at chance ($50.7\%$): the binary-reward critic captures progress-related
inefficiency well, while fine-grained smoothness is largely invisible to it.

\begin{table}[t]
  \centering
  \small
  \caption{Advantage--quality alignment on $70$ real-world Tube Insertion
  demonstrations: share of manually flagged steps in each category that fall in
  the low-advantage half of their trajectory ($50\%$ = chance).}
  \label{tab:adv_alignment}
  \begin{tabular}{lccc}
    \toprule
    Category & Detour & Hesitation & Jerk/abrupt \\
    \midrule
    In $\mathrm{Low}$-$A$ & $626/978$ ($\mathbf{64.0\%}$) & $345/484$ ($\mathbf{71.3\%}$) & $149/294$ ($50.7\%$) \\
    \bottomrule
  \end{tabular}
\end{table}

\section{Reconstruction Objective and Residual-Map Geometry}
\label{app:surrogate}

This appendix distinguishes the reconstruction objective used by AWLS from
marginal action likelihood, then establishes a conditional result for the
local geometry of the residual shaper. We evaluate retained action diversity
separately rather than infer it from the residual form.

\paragraph{Reconstruction and likelihood.}
Write the composed policy as
$f_\phi(\bm{o},\bm{\epsilon})
 =G_\theta(\bm{o},T_\phi(\bm{o},\bm{\epsilon}))$.
For an auxiliary Gaussian observation model
$p_\sigma(\bm{a}\mid\bm{o},\bm{\epsilon})
 =\mathcal{N}(\bm{a};f_\phi(\bm{o},\bm{\epsilon}),\sigma^2I)$
with fixed $\sigma^2>0$, the expected conditional negative log-likelihood is
\begin{equation}
  \label{eq:conditional_reconstruction_nll}
  \mathbb{E}_{\bm{\epsilon}}\!\left[
    -\log p_\sigma(\bm{a}\mid\bm{o},\bm{\epsilon})\right]
  =\frac{1}{2\sigma^2}\,
    \ell_{\mathrm{MSE}}^\phi(\bm{o},\bm{a})+\text{const}.
\end{equation}
This is a conditional Gaussian interpretation of reconstruction;
$\sigma$ is an analytical device and AWLS adds no Gaussian action noise at
deployment. For the auxiliary marginal
$p_\sigma(\bm{a}\mid\bm{o})
 =\mathbb{E}_{\bm{\epsilon}}[p_\sigma(\bm{a}\mid\bm{o},\bm{\epsilon})]$,
Jensen's inequality gives
\begin{equation}
  \label{eq:reconstruction_jensen}
  -\log p_\sigma(\bm{a}\mid\bm{o})
  \leq \mathbb{E}_{\bm{\epsilon}}\!\left[
      -\log p_\sigma(\bm{a}\mid\bm{o},\bm{\epsilon})\right].
\end{equation}
Positive detached weights preserve this inequality after averaging over
observations and demonstrated actions, but do not ensure that the bound is
tight or that optimizing it matches a marginal-likelihood update.

The deterministic pushforward policy used at deployment has a different
likelihood. If both maps are dimension-matched diffeomorphisms, let
$\bm{\epsilon}_{\bm{a}}=f_\phi(\bm{o},\cdot)^{-1}(\bm{a})$ and
$\bm{z}_{\bm{a}}=T_\phi(\bm{o},\bm{\epsilon}_{\bm{a}})$.
The change-of-variables formula~\cite{grathwohl2019ffjord} gives
\begin{equation}
  \label{eq:pushforward_nll}
  -\log\pi_\phi(\bm{a}\mid\bm{o})
  =-\log p_{\bm{\epsilon}}(\bm{\epsilon}_{\bm{a}})
    +h_G(\bm{o},\bm{z}_{\bm{a}})
    +h_T(\bm{o},\bm{\epsilon}_{\bm{a}}),
\end{equation}
where $p_{\bm{\epsilon}}$ is the standard Gaussian density,
$h_G=\log|\det(\partial G_\theta/\partial\bm{z})|$, and
$h_T=\log|\det(\partial T_\phi/\partial\bm{\epsilon})|$.
These terms are evaluated at the demonstrated action's inverse latent,
whereas reconstruction averages over independent latent draws. AWLS therefore
uses an AWR-inspired supervised surrogate; we do not assert an MSE--marginal
likelihood identity or a bound on their gradient discrepancy.

\paragraph{Conditional local geometry.}
For a fixed observation, the residual map
$T_\phi(\bm{o},\bm{\epsilon})
 =\bm{\epsilon}+\Delta_\phi(\bm{o},\bm{\epsilon})$
has Jacobian $J_T=I+J_\Delta$, where
$J_\Delta=\partial\Delta_\phi/\partial\bm{\epsilon}$.
The following result controls the local deformation of this map.
\begin{proposition}[Local conditioning of the residual map]
  \label{prop:surrogate}
  Let $\Delta_\phi(\bm{o},\cdot):\mathbb{R}^d\to\mathbb{R}^d$ be
  continuously differentiable. At any state--noise pair satisfying
  $\|J_\Delta\|_{\mathrm{op}}\leq L_v<1$, the shaper Jacobian is
  nonsingular and obeys
  \begin{equation}
    \label{eq:residual_conditioning}
    1-L_v\leq\sigma_{\min}(J_T)
      \leq\sigma_{\max}(J_T)\leq1+L_v,
    \qquad
    \kappa_2(J_T)\leq\frac{1+L_v}{1-L_v}.
  \end{equation}
  Its local log-determinant satisfies
  \begin{equation}
    \label{eq:residual_logdet_bound}
    d\log(1-L_v)\leq h_T\leq d\log(1+L_v).
  \end{equation}
\end{proposition}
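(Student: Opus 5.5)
The plan is to derive every claim from the triangle inequality applied to $J_T=I+J_\Delta$ in the operator norm. After that, the conditioning and log-determinant bounds follow by multiplying singular-value bounds. Continuous differentiability of $\Delta_\phi(\bm{o},\cdot)$ is used only so that $J_\Delta$ exists at the given state--noise pair. All statements are pointwise; no global invertibility of $T_\phi$ is claimed or needed.

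First, for any unit vector $\bm{v}\in\mathbb{R}^d$ I will bound
\begin{equation*}
  \|J_T\bm{v}\|_2=\|\bm{v}+J_\Delta\bm{v}\|_2 .
\end{equation*}
The upper triangle inequality gives $\|J_T\bm{v}\|_2\leq 1+\|J_\Delta\|_{\mathrm{op}}\leq 1+L_v$. The reverse triangle inequality gives $\|J_T\bm{v}\|_2\geq 1-\|J_\Delta\bm{v}\|_2\geq 1-L_v$. By the variational characterization
\begin{equation*}
  \sigma_{\min}(J_T)=\min_{\|\bm{v}\|=1}\|J_T\bm{v}\|_2,\qquad
  \sigma_{\max}(J_T)=\max_{\|\bm{v}\|=1}\|J_T\bm{v}\|_2,
\end{equation*}
this yields the singular-value sandwich in Eq.~\eqref{eq:residual_conditioning}. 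Because $L_v<1$, we get $\sigma_{\min}(J_T)>0$, so the kernel of $J_T$ is trivial and $J_T$ is nonsingular. An equivalent route is the Neumann series for $(I+J_\Delta)^{-1}$; I would mention it only as a remark. The condition-number bound follows immediately as $\kappa_2=\sigma_{\max}/\sigma_{\min}$.

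For the log-determinant, I use $|\det J_T|=\prod_{i=1}^d\sigma_i(J_T)$, which holds via the SVD since $|\det U|=|\det V|=1$. Then
\begin{equation*}
  h_T=\log|\det J_T|=\sum_{i=1}^d\log\sigma_i(J_T).
\end{equation*}
Each term lies in $[\log(1-L_v),\log(1+L_v)]$ by the previous step, and summing the $d$ terms gives Eq.~\eqref{eq:residual_logdet_bound}.

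I do not expect any step to be a real obstacle. The only point needing care is to state the lower bound through the reverse triangle inequality rather than through eigenvalues. $J_T$ need not be symmetric, so eigenvalue bounds alone would not control singular values.
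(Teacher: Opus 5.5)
Your proposal is correct and follows essentially the same route as the paper: the triangle and reverse triangle inequalities applied to $(I+J_\Delta)\bm{v}$ over unit vectors, then the variational characterization of $\sigma_{\min}$ and $\sigma_{\max}$, nonsingularity from $1-L_v>0$, and $|\det J_T|=\prod_i\sigma_i(J_T)$ for the log-determinant bound. Your added remark, that eigenvalue bounds alone would not control the singular values of the non-symmetric $J_T$, is a correct and worthwhile clarification that the paper leaves implicit.
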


\begin{proof}
For any vector $\bm{v}$, the triangle and reverse triangle inequalities give
\[
  (1-L_v)\|\bm{v}\|_2
  \leq\|(I+J_\Delta)\bm{v}\|_2
  \leq(1+L_v)\|\bm{v}\|_2.
\]
Taking the infimum and supremum over unit vectors yields the singular-value
bounds and hence the condition-number bound. Since $1-L_v>0$, $J_T$ is
nonsingular. The absolute determinant is the product of its singular values;
taking logarithms gives Eq.~\eqref{eq:residual_logdet_bound}.
\end{proof}

\begin{remark}[Scope and measured evidence]
The proposition is a pointwise geometric result. If its condition holds
uniformly over all $\bm{\epsilon}\in\mathbb{R}^d$ for a fixed observation,
then $\Delta_\phi$ is $L_v$-Lipschitz and $T_\phi$ is globally invertible:
for any target $\bm{z}$, the inverse is the unique fixed point of the
contraction $\bm{\epsilon}\mapsto\bm{z}-\Delta_\phi(\bm{o},\bm{\epsilon})$.
AWLS does not enforce this uniform condition. Table~\ref{tab:jacobian_probe}
reports that $\|J_\Delta\|_{\mathrm{op}}<1$ at every evaluated state--noise
sample, supporting nonsingular local geometry at those points rather than
certifying global invertibility.

Local Jacobian control also does not bound residual displacement: a constant
correction can have zero Jacobian and still shift the latent input
substantially. We therefore measure displacement separately, alongside the
Jacobian diagnostics. Neither measurement establishes a bound on the
MSE--likelihood discrepancy or guarantees that generated actions stay within
data support; action diversity and task performance are assessed empirically.
Table~\ref{tab:surrogate} summarizes these conditional properties for the
Gaussian, direct, and residual shaper structures, and
Figure~\ref{fig:residual_direct} compares the residual and direct shapers
empirically on clean DexArt (the comparison behind Table~\ref{tab:ablation}(b)).
\end{remark}

\begin{table}[!t]
\centering
\small
\caption{Noise-shaper structure and conditional geometric properties.
  The residual bounds require $\|J_\Delta\|_{\mathrm{op}}\leq L_v<1$.}
\label{tab:surrogate}
\begin{tabular}{lccc}
\toprule
 & Gaussian & Direct NN & Residual \\
\midrule
$J_T$ & $\operatorname{diag}(\bm{\sigma}_\phi)$ & $J_{f_\phi}$ & $I+J_\Delta$ \\
Nonsingularity & $\min_i\sigma_i>0$ & Requires a condition & If $L_v<1$ \\
$\kappa_2(J_T)$ & $\max_i\sigma_i/\min_i\sigma_i$ & Unconstrained & $\leq(1+L_v)/(1-L_v)$ \\
\bottomrule
\end{tabular}
\end{table}

\begin{figure}[!t]
\centering
\includegraphics[width=0.62\textwidth]{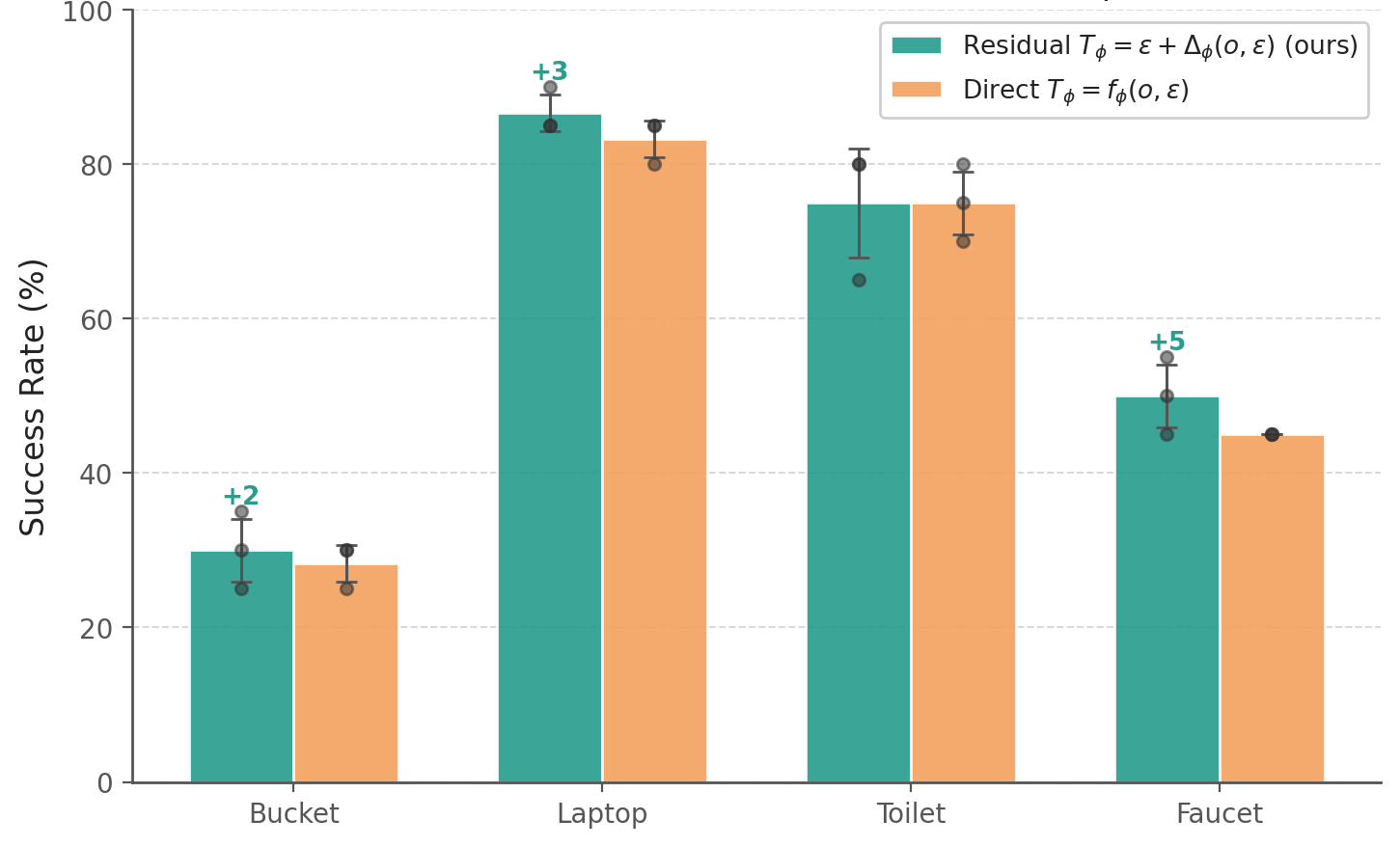}
\caption{Residual vs.\ direct noise shaper on clean DexArt (ablation batch,
  mean$\pm$std over three seeds, best checkpoint). The residual
  parameterization $T_\phi=\bm{\epsilon}+\Delta_\phi$ matches or exceeds the
  unconstrained direct map $T_\phi=f_\phi$ on every task. This comparison
  supports the residual parameterization empirically; its local geometric
  measurements are reported separately in Table~\ref{tab:jacobian_probe}.}
\label{fig:residual_direct}
\end{figure}

\paragraph{Objective surrogate and latent diversity.}
Because the marginal action likelihood of the composed policy is not directly
tractable, AWLS uses advantage-weighted reconstruction as an AWR-inspired
surrogate with the conditional Gaussian interpretation above. Regressing
independent noise draws toward demonstrated targets can reduce latent
diversity by mapping different draws to similar actions.
Table~\ref{tab:diversity} reports the
relative action diversity $D_{\mathrm{rel}}$, the mean pairwise spread of
decoded actions across latent samples at matched held-out states (one shared
noise book), normalized by the spread of the BC backbone from the same run
($1$: base diversity, $0$: collapse), together with success rate. All entries
aggregate two-seed, newly retrained runs over the four DexArt tasks; for each
task and seed the post-training variants share the same frozen base
checkpoint. The residual shaper retains $0.457$ of the base policy's action
spread, more than any of the other tested shapers, and reaches the highest
success rate among these variants. These results show partial retention of
diversity rather than preservation of the original conditional action
distribution. An unconstrained direct map and a Gaussian reparameterization
collapse diversity almost completely; entropy regularization partly restores
it at the cost of an additional objective and coefficient.

\begin{table}[t]
  \centering
  \small
  \caption{Latent diversity and success of noise-shaper variants on DexArt
  (two-seed matched reruns; shared frozen base per task and seed).
  $D_{\mathrm{rel}}$ is the mean pairwise action spread across latent samples
  relative to the BC backbone ($1$: base, $0$: collapse).}
  \label{tab:diversity}
  \begin{tabular}{lccccc}
    \toprule
    & BC/Base & Direct & Gaussian & Gaussian $+$ reg. & \textbf{Residual} \\
    \midrule
    $D_{\mathrm{rel}}$ $\uparrow$ & $1.000$ & $5.4{\times}10^{-4}$ & $5.2{\times}10^{-5}$ & $0.380$ & $\mathbf{0.457}$ \\
    SR (\%) $\uparrow$ & $55.2$ & $59.8$ & $59.5$ & $60.2$ & $\mathbf{61.9}$ \\
    \bottomrule
  \end{tabular}
\end{table}

\section{Proof of Proposition~\ref{prop:ess} (Bounded Weight Concentration)}
\label{app:ess}

We restate the claim of Proposition~\ref{prop:ess}: for a minibatch of size
$n$ the AWLS weights satisfy $w_{\max}/w_{\min}\leq R$ with $R=e^\beta$, and the
effective sample size obeys
$\mathrm{ESS}=(\sum_i w_i)^2/\sum_i w_i^2 \geq 4nR/(1+R)^2$, which evaluates to
$\mathrm{ESS}\geq 46.3$ for $\beta=3$ and $n=256$.

\begin{proof}[Proof of Proposition~\ref{prop:ess}]
\textbf{Step 1 (weight ratio).}
Each AWLS weight is $w_i=\exp(\beta\,\tilde{A}_i)$ with the per-batch normalized
advantage $\tilde{A}_i\in[0,1]$ from Eq.~\eqref{eq:awls_norm}. Since
$x\mapsto e^{\beta x}$ is increasing for $\beta>0$,
$w_i\in[e^0,e^\beta]=[1,e^\beta]$, hence $w_{\min}\geq 1$, $w_{\max}\leq e^\beta$,
and $w_{\max}/w_{\min}\leq e^\beta=:R$. This holds for any minibatch and any
critic state, because $\tilde{A}_i\in[0,1]$ by construction independently of the
raw advantage scale.

\textbf{Step 2 (two-point worst case).}
$\mathrm{ESS}$ is invariant to a common positive rescaling, so set
$w_i\in[1,R]$. Writing $S_1=\sum_i w_i$, $S_2=\sum_i w_i^2$ and viewing
$\mathrm{ESS}=S_1^2/S_2$ as a function of a single coordinate $w_j\in[1,R]$,
\begin{equation*}
  \frac{\partial\,\mathrm{ESS}}{\partial w_j}
    = \frac{2S_1 S_2 - 2w_j S_1^2}{S_2^2}
    = \frac{2S_1}{S_2^2}\,(S_2 - w_j S_1),
\end{equation*}
whose sign switches at most once, from $+$ to $-$, as $w_j$ increases (because
$S_2 - w_j S_1$ is decreasing in $w_j$). The single-coordinate restriction is
hence unimodal with an interior maximum, so over the box $w_j\in[1,R]$ its
\emph{minimum} is attained at an endpoint $w_j\in\{1,R\}$. Applying this
coordinate-wise, the minimizer is a two-point configuration: $k$ weights equal
to $R$ and $n-k$ equal to $1$.

\textbf{Step 3 (optimal split).}
With $p=k/n$ the fraction at $R$,
$S_1=n(1+p(R-1))$ and $S_2=n(1+p(R^2-1))$, so (relaxing $p$ to $[0,1]$)
\begin{equation*}
  \mathrm{ESS}(p) = n\,\frac{(1+p(R-1))^2}{1+p(R^2-1)}.
\end{equation*}
Setting the derivative to zero and clearing the positive denominator gives
$(R+1)\,(1+p(R-1)) = 2\,(1+p(R^2-1))$. Using $R^2-1=(R-1)(R+1)$ this reduces to
$(R-1)=p(R-1)(R+1)$, i.e.\ $p^\ast = 1/(R+1)$ (the endpoints $p\in\{0,1\}$ give
$\mathrm{ESS}=n$, so $p^\ast$ is the interior minimum).

\textbf{Step 4 (evaluating the bound).}
At $p^\ast=1/(R+1)$,
$1+p^\ast(R-1)=2R/(R+1)$ and $1+p^\ast(R^2-1)=R$, so
\begin{equation*}
  \min_p \mathrm{ESS}(p)
    = n\,\frac{(2R/(R+1))^2}{R}
    = \frac{4nR}{(1+R)^2}.
\end{equation*}
Since the continuous relaxation lower-bounds the integer-constrained minimum,
every admissible weight configuration satisfies
$\mathrm{ESS}\geq 4nR/(1+R)^2$.

\textbf{Step 5 (numerical instance).}
For $\beta=3$, $R=e^3=20.0855\ldots$ and $(1+R)^2=444.60\ldots$, so with $n=256$
\begin{equation*}
  \mathrm{ESS}
    \geq \frac{4\cdot 256\cdot 20.0855}{444.60}
    = 46.26\ldots\ (\approx 46.3),
\end{equation*}
i.e.\ even the worst-case two-point batch retains at least $\approx 46$
effective samples out of $256$, independent of the critic state.
\end{proof}

\begin{remark}[Tightness and scope]
\label{rem:ess_scope}
The bound is \emph{tight in the worst case}: $4nR/(1+R)^2$ is the largest
critic-independent floor possible, attained (in the continuous relaxation of
Steps~2--4) by the two-point split with a fraction $p^\ast=1/(R+1)$ of weights at
$w=R$. It
is, however, \emph{loose for realized data}: during training the normalized
advantages $\tilde{A}$ spread across $[0,1]$ rather than concentrating at the two
extremes, so the realized per-batch ESS sits well above the floor and the
adversarial two-point batch essentially never occurs. The guarantee is also
\emph{per minibatch}: the floor $4nR/(1+R)^2$ holds for every batch and every
gradient step---the scale that governs the gradient-estimator variance---for the
batch size $n$ used at that step, rather than for a single global, dataset-level
ESS. Any normalization that maps advantages into $[0,1]$, whether per-batch or
with a fixed global range, inherits this floor: it is the bound
$\tilde{A}\in[0,1]$, not the choice of range, that caps $w\le e^\beta$.
\end{remark}

\section{Checkpoint Selection Diagnostics}
\label{app:ckpt}

\paragraph{Metric definitions.}
Best-of-$M$ reconstruction error draws $M$ latents per validation pair and
reports the minimum action-space error:
\begin{equation}
  \label{eq:bok}
  L_{\text{BoM}}(\theta_k)
    = \mathbb{E}_{(\bm{o},\bm{a})\sim\mathcal{D}_{\text{val}},\;
        \bm{z}_{1:M}\sim\mathcal{N}(0,I)}\!
      \left[\min_{j=1,\ldots,M}
        \big\|G_{\theta_k}(\bm{o},\bm{z}_j) - \bm{a}\big\|^2
      \right].
\end{equation}
This is appropriate because the decoder is stochastic: we require that the
validation action remain \emph{reachable} under the latent-noise prior.
Directional sensitivity probes the decoder Jacobian via finite differences:
\begin{equation}
  \label{eq:sensitivity}
  S_J(\theta_k)
    = \mathbb{E}_{\bm{o}\sim\mathcal{D}_{\text{val}},\;
        \bm{z}\sim\mathcal{N}(0,I),\;
        \bm{u}\sim\text{Unif}(\mathbb{S}^{d-1})}\!
      \left[
        \frac{\big\|G_{\theta_k}(\bm{o},\bm{z}+h\bm{u})
              - G_{\theta_k}(\bm{o},\bm{z})\big\|}{h}
      \right].
\end{equation}
For small $h$, this estimates the average directional magnitude
$\|\partial G_{\theta_k}/\partial\bm{z}\;\bm{u}\|$ and helps detect
checkpoints whose decoder has become locally insensitive to latent noise.
When $S_J$ is flat across $\mathcal{C}_\delta$, the selection rule reduces
to choosing by BC quality, indicating that checkpoint choice is unlikely to
affect the latent-steering mechanism.

\paragraph{Checkpoint diagnostics.}
Over a full $3000$-epoch DexArt behavior-cloning run (checkpoints saved and
diagnosed every $50$ epochs),
behavior-cloning success rises and then plateaus, whereas decoder sensitivity
$S_J$ peaks at a task-dependent epoch and then declines as continued training
collapses the noise-to-action response; post-training success tracks $S_J$
rather than BC success and degrades when AWLS is applied to over-trained,
low-sensitivity bases. Among checkpoints whose $L_{\text{BoM}}$ is within
tolerance ($1.2\times$ the final-epoch loss), the criterion therefore selects
the highest-$S_J$ checkpoint. The per-task validation of the rule---selection regret against a full oracle
checkpoint sweep and the $S_J$/success rank correlation---is reported in
Table~\ref{tab:ckpt_oracle}.

\paragraph{Does $S_J$ predict the best checkpoint? (validation protocol).}
The diagnostics above specify \emph{how} the rule selects a checkpoint; we now
verify that the rollout-free criterion actually recovers the checkpoint that
yields the best post-training policy. We make this precise with a one-time
\emph{oracle sweep}, run once on a held-out validation suite and never needed at
deployment. We run the full AWLS post-training at every in-tolerance
checkpoint $\theta_k\in\mathcal{C}_\delta$ and measure its rollout success
$\mathrm{SR}_{\text{AWLS}}(\theta_k)$, defining the oracle checkpoint
$k^\star=\arg\max_k\mathrm{SR}_{\text{AWLS}}(\theta_k)$.
This SR-defined oracle is used only as a verification reference. When applying
AWLS, the frozen base is selected by the offline $S_J$ criterion within the
$L_{\text{BoM}}$ tolerance; rollout success rates are not used to make that
selection. Two quantities then test
the rollout-free rule, whose offline choice we denote $\hat{k}$ (the
highest-$S_J$ checkpoint within the $L_{\text{BoM}}$ tolerance):
\begin{itemize}
  \item \textbf{Selection regret}
    $\Delta=\mathrm{SR}_{\text{AWLS}}(\theta_{k^\star})
            -\mathrm{SR}_{\text{AWLS}}(\theta_{\hat{k}})$.
    Small $\Delta$ means the offline criterion recovers a near-oracle base
    without any per-checkpoint rollouts.
  \item \textbf{Rank agreement}: the Spearman correlation
    $\rho\big(S_J(\theta_k),\,\mathrm{SR}_{\text{AWLS}}(\theta_k)\big)$ over
    $k\in\mathcal{C}_\delta$. Positive $\rho$ means decoder sensitivity is
    genuinely \emph{predictive} of post-training quality across the sweep, not
    merely coincident with it at the peak.
\end{itemize}
Reporting $\Delta$ and $\rho$ per task (means over the three seeds) gives the
across-task validation. We regard the rule as validated when $\Delta$ falls within
evaluation noise (a few success-rate points) and $\rho>0$ on tasks with a
responsive decoder.

\paragraph{Why decoder sensitivity decays.}
The decay of $S_J$ follows directly from behavior cloning sharpening the
conditional action distribution. Early in training the flow decoder still maps a
broad region of the latent-noise prior onto a spread of plausible actions, so a
latent perturbation $\bm{z}\!+\!h\bm{u}$ produces a sizeable change in the decoded
action and $S_J$ is large. As BC proceeds, the decoder concentrates the mass it
places on each observation onto the demonstrated action mode, driving the
conditional map toward a near-deterministic function of the observation; the
latent Jacobian $\partial G_\theta/\partial\bm{z}$ correspondingly shrinks and
$S_J$ falls. This is the same property that makes latent steering both attractive
and fragile: AWLS can only reshape behavior that the frozen decoder still exposes
through its latent input, so post-training gains are largest at an intermediate
checkpoint where the decoder remains responsive yet has already acquired
competent behavior, and shrink once over-training has collapsed that degree of
freedom.

\paragraph{Measured oracle-sweep validation.}
Table~\ref{tab:ckpt_oracle} reports this oracle sweep on the four DexArt tasks
(three training seeds; the checkpoint-sweep run behind
Table~\ref{tab:ablation}(c)). Three findings answer the question. (i) \emph{Small regret}: the $S_J$-selected base is within
$\Delta_{\text{reg}}\!\le\!1.5$ of the oracle on every task ($0.9$ on the DexArt
average), i.e.\ within the seed-to-seed variation of the success rates, so the
offline rule recovers a near-oracle base without any per-checkpoint rollouts.
(ii) \emph{Large gain over the BC-quality heuristic}: choosing by $S_J$ rather than
by BC loss adds $\Delta_{\text{BC}}$ from $+4.8$ to $+13.0$ across the four tasks
(mean $+8.2$; per task in Table~\ref{tab:ckpt_oracle}), reproducing the
main-paper ablation. (iii) \emph{Predictive, not coincidental}: the Spearman
correlation $\rho(S_J,\mathrm{SR}_{\text{AWLS}})$ over the sweep is strongly
positive on the three responsive-decoder tasks (mean $0.78$) and near zero only on
\textsc{Bucket}, whose $S_J$ is flat---there $\rho$ is uninformative but
$\Delta_{\text{reg}}=0.4$, so the rule is harmless rather than misleading.
The $S_J$-selected checkpoint is the MF-AWLS model reported in the main paper, so its
per-task DexArt results---and hence the $62.2$ average---match
Table~\ref{tab:full_sim_results} exactly. The oracle lies at most $\Delta_{\text{reg}}$ above it
and BC-best $\Delta_{\text{BC}}$ below, so the small regret and positive rank agreement
are independent of the operating point.

\begin{table}[t]
  \centering
  \caption{\textbf{Measured checkpoint-selection validation (oracle sweep)} on the
  four DexArt tasks (success rate \%, means over three training seeds, $20$
  evaluation episodes per task per seed). For each task, full AWLS post-training
  is run at every in-tolerance checkpoint; $k^\star$ is the oracle (highest AWLS
  success), $\hat k$ the $S_J$-selected checkpoint (highest $S_J$ within the
  $L_{\text{BoM}}$ tolerance), and BC-best the highest-BC-quality checkpoint.
  $\Delta_{\text{reg}}=\mathrm{SR}(k^\star)-\mathrm{SR}(\hat k)$ is the selection
  regret, $\Delta_{\text{BC}}=\mathrm{SR}(\hat k)-\mathrm{SR}(\text{BC-best})$ the
  gain over BC-quality selection, and $\rho$ the Spearman correlation between
  $S_J$ and AWLS success across the sweep. The $S_J$-selected column is the
  deployed MF-AWLS model (per-task $\pm$: std over seeds); the DexArt-average
  $\rho$ includes the flat-$S_J$ \textsc{Bucket} task.}
  \label{tab:ckpt_oracle}
  \begin{tabular}{lcccccc}
    \toprule
    Task & BC-best SR & $S_J$-sel.\ SR (ours) & Oracle SR
      & $\Delta_{\text{reg}}$ & $\Delta_{\text{BC}}$ & $\rho(S_J,\mathrm{SR})$ \\
    \midrule
    Laptop & $79.0$ & $86.0\pm1.4$ & $87.0$ & $1.0$ & $+7.0$ & $0.78$ \\
    Faucet & $40.0$ & $48.0\pm1.2$ & $48.8$ & $0.8$ & $+8.0$ & $0.72$ \\
    Toilet & $65.0$ & $78.0\pm0.7$ & $79.5$ & $1.5$ & $+13.0$ & $0.84$ \\
    Bucket & $32.0$ & $36.8\pm1.9$ & $37.2$ & $0.4$ & $+4.8$ & $0.16$ \\
    \midrule
    DexArt avg. & $54.0$ & $62.2\pm1.3$ & $63.1$ & $0.9$ & $+8.2$ & $0.63$ \\
    \bottomrule
  \end{tabular}
\end{table}

\section{Implementation Details}
\label{app:impl}

This section collects the architecture and hyperparameter settings used in our
experiments, drawn from the released training configurations. On top of the frozen decoder, the AWLS post-training
objective updates only the residual shaper ($\approx\!1$M parameters). The
offline IQL advantage estimator---a critic ensemble ($\approx\!0.58$M) and a
value network ($\approx\!0.27$M)---is trained separately and enters the
objective only through detached advantage weights, with no gradient propagated
into the shaper or the decoder. Algorithm~1 summarizes the end-to-end post-training procedure.

\begin{figure}[t]
\centering
\begin{minipage}{0.96\linewidth}
\small
\hrule height 0.8pt\vspace{3pt}
\textbf{Algorithm 1}\quad AWLS post-training on a frozen one-step decoder\\[2pt]
\hrule\vspace{4pt}
\textbf{Require:} demonstrations $\mathcal{D}$; frozen-base candidates $\{\theta_k\}$; inverse temperature $\beta$; tolerance $\delta$\\[3pt]
\textit{(1) Rollout-free frozen-base selection (Appendix~\ref{app:ckpt})}\\
\hspace*{1em}\textbf{for} each candidate $\theta_k$: compute $L_{\text{BoM}}(\theta_k)$, $S_J(\theta_k)$ \hfill(Eq.~\eqref{eq:bok},~\eqref{eq:sensitivity})\\
\hspace*{1em}$\mathcal{C}_\delta \gets \{\theta_k : L_{\text{BoM}}(\theta_k)\le(1+\delta)\,L_{\text{BoM}}(\theta_{\text{final}})\}$\\
\hspace*{1em}$\theta \gets \arg\max_{\theta_k\in\mathcal{C}_\delta} S_J(\theta_k)$;\quad \textbf{freeze} decoder $G_\theta$\\[3pt]
\textit{(2) Offline IQL critic (Appendix~\ref{app:critic})}\\
\hspace*{1em}train $Q_\psi$ ($2$-ensemble, CQL) and $V_\nu$ (expectile $\tau$) on $\mathcal{D}$\\
\hspace*{1em}$A(\bm{o},\bm{a})\gets\operatorname{sg}[\min_k Q_{\psi_k}(\bm{o},\bm{a})-V_\nu(\bm{o})]$\\[3pt]
\textit{(3) Advantage-weighted residual-shaper training}\\
\hspace*{1em}initialize $T_\phi(\bm{o},\bm{\epsilon})=\bm{\epsilon}+\Delta_\phi(\bm{o},\bm{\epsilon})$ near identity\\
\hspace*{1em}\textbf{for} $t=1,\dots,N$:\\
\hspace*{2em}sample $\{(\bm{o},\bm{a})\}\sim\mathcal{D}$,\; $\bm{\epsilon}\sim\mathcal{N}(0,I)$\\
\hspace*{2em}$\tilde{A}\gets(A-A_{\min}^{\mathcal{B}})/(A_{\max}^{\mathcal{B}}-A_{\min}^{\mathcal{B}}+\varepsilon)$;\; $w\gets\exp(\beta\,\tilde{A})$ \hfill(Eq.~\eqref{eq:awls_norm})\\
\hspace*{2em}$\bm{a}'\gets G_\theta(\bm{o},\,T_\phi(\bm{o},\bm{\epsilon}))$ \hfill(frozen decoder)\\
\hspace*{2em}$\phi\gets\phi-\eta\,\nabla_\phi\,\mathbb{E}[\,w\,\lVert\bm{a}'-\bm{a}\rVert^2\,]$ \hfill(Eq.~\eqref{eq:awr_loss})\\
\hspace*{1em}\textbf{return} residual noise shaper $T_\phi$\\[2pt]
\hrule height 0.8pt
\end{minipage}
\end{figure}

\subsection{One-Step Decoder Backbones and Per-Benchmark Configuration}
\label{app:impl_decoder}

We instantiate AWLS on two one-step generative decoders: a MeanFlow
decoder (MF-AWLS) and a FlowPolicy-style one-step
decoder (FP-AWLS). Both consume a point-cloud
observation through a DP3-style PointNet (XYZ) encoder (feature
dimension $64$, LayerNorm) together with a $[64,64]$ proprioceptive state MLP,
feeding a \texttt{ConditionalUnet1D} denoiser (channel dimensions
$[512,1024,2048]$, step-embedding dimension $128$, kernel size $5$, $8$ groups,
FiLM conditioning; $\approx\!255$M parameters). Observations and actions are
normalized to $[-1,1]$. The MeanFlow decoder is trained with the JVP MeanFlow
objective and an adaptive $L_2$ weighting ($\gamma\!=\!0.5$), logit-normal
$(t,r)$ time sampling, classifier-free guidance weight $\omega=2.0$, the AdamW
optimizer (learning rate $10^{-4}$, $\beta=(0.95,0.999)$, weight decay
$10^{-6}$, batch size $128$) with a linear-warmup ($100$--$500$ steps) then
cosine-annealing schedule and gradient-norm clipping at $1.0$, and an
exponential moving average of the weights (decay $0.995$); both decoders are
frozen during AWLS post-training. The latent-noise dimension is
$d=\text{horizon}\times\text{action dim}$. The per-benchmark observation and
decoder configuration---shared by the DP3 baseline and the one-step
backbones---is given in Table~\ref{tab:impl_decoder}; optimizer settings common
to all backbones appear in Table~\ref{tab:impl_shaper}.
The ``held-out episodes per checkpoint'' row is the number of held-out
demonstration episodes from which the validation observations for the
rollout-free diagnostics $S_J$ and $L_{\text{BoM}}$ are drawn
(Appendix~\ref{app:ckpt}); the diagnostics are computed at every saved
checkpoint, i.e.\ every $50$ epochs, and involve no policy rollouts.

\begin{table}[!t]
\centering
\caption{Per-benchmark observation and one-step decoder configuration. All
  benchmarks use point-cloud observations through a shared DP3-style encoder.}
\label{tab:impl_decoder}
\begin{tabular}{lccc}
\toprule
 & Adroit & DexArt & RoboTwin \\
\midrule
Observation modality          & point cloud & point cloud & point cloud \\
Point-cloud points            & 512  & 1024 & 1024 \\
Encoder feature dim           & 64   & 64   & 192  \\
Prediction horizon            & 16   & 16   & 8    \\
Observation steps             & 2    & 2    & 3    \\
Action steps                  & 8    & 8    & 6    \\
Demonstrations per task       & 10   & 100  & 50   \\
BC training length            & 3000 epochs & 3000 epochs & 1000 epochs \\
Held-out episodes per checkpoint ($S_J$, $L_{\text{BoM}}$) & 20 & 20 & 10 \\
Random seeds                  & 3 & 3 & 3 \\
\bottomrule
\end{tabular}
\end{table}

\subsection{Residual Noise Shaper $T_\phi$}
\label{app:impl_shaper}

The residual noise shaper realizes
$T_\phi(\bm{o},\bm{\epsilon}) = \bm{\epsilon} + \Delta_\phi(\bm{o},\bm{\epsilon})$,
where $\Delta_\phi$ is an observation-conditioned network whose output is
initialized near zero.
The shaper is the only module updated during
AWLS post-training; the decoder is frozen and the critic-derived advantage
weights are detached. Table~\ref{tab:impl_shaper} lists its configuration.

\begin{table}[!t]
\centering
\caption{Residual noise shaper $T_\phi$ ($\Delta_\phi$) architecture and
  AWLS post-training configuration.}
\label{tab:impl_shaper}
\begin{tabular}{ll}
\toprule
\multicolumn{2}{l}{\textit{Architecture}}\\
Network $\Delta_\phi$ type           & MLP, $\bm{z}=\bm{\epsilon}+\text{scale}\cdot\Delta_\phi(\bm{o},\bm{\epsilon})$ \\
Hidden dimensions                    & $[512,512,512,512]$ \\
Activation / normalization           & GELU / LayerNorm \\
Conditioning input                   & observation encoding, latent noise $\bm{\epsilon}$ \\
Residual scale                       & $1.0$ \\
Output initialization                & orthogonal (gain $0.1$), near-identity start \\
Output                               & residual correction $\Delta_\phi$ \\
Trainable parameters                 & $\approx 1\text{M}$ \\
\midrule
\multicolumn{2}{l}{\textit{AWLS post-training}}\\
Post-training steps                  & $8000$ \\
Optimizer                            & Adam \\
Learning rate                        & $3\times10^{-4}$ \\
Weight decay                         & $0$ \\
Gradient clipping                    & $1.0$ (max norm) \\
Batch size $n$                       & 256 \\
Random seeds                         & 3 \\
\bottomrule
\end{tabular}
\end{table}

\paragraph{Residual scale and measured residual-map statistics.}
The \emph{residual scale} in Table~\ref{tab:impl_shaper} is fixed to $1.0$:
the shaper applies the full additive correction $\Delta_\phi$ without
down-weighting. Orthogonal, gain-$0.1$ output initialization starts the map
near the identity; no explicit small residual scale or Jacobian-norm constraint
is imposed during training.
Table~\ref{tab:jacobian_probe} summarizes a full-SVD evaluation of the exact
residual-correction Jacobian
$J_\Delta=\partial\Delta_\phi/\partial\bm{\epsilon}$ at $3{,}072$
state--noise samples from the evaluated AWLS checkpoints. The learned
correction is modest ($\|\Delta_\phi\|_2/\|\bm{\epsilon}\|_2\approx0.175$),
$\|J_\Delta\|_2<1$ holds at every sample with
$Q_{.95}[\|J_\Delta\|_2]=0.065$, and the transport Jacobian $I+J_\Delta$
stays well away from singularity ($Q_{.05}[\sigma_{\min}(I+J_\Delta)]=0.966$,
median condition number $1.04$).

\begin{table}[t]
  \centering
  \footnotesize
  \setlength{\tabcolsep}{5pt}
  \caption{Measured residual-map statistics of the trained shaper at $3{,}072$
  evaluated state--noise samples (full SVD of the exact Jacobian
  $J_\Delta=\partial\Delta_\phi/\partial\bm{\epsilon}$).}
  \label{tab:jacobian_probe}
  \begin{tabular}{cccccc}
    \toprule
    Residual size & \multicolumn{3}{c}{Residual Jacobian $J_\Delta$} & \multicolumn{2}{c}{Transport Jacobian $I+J_\Delta$} \\
    \cmidrule(lr){1-1}\cmidrule(lr){2-4}\cmidrule(lr){5-6}
    \shortstack{Rel.\ norm\\$\|\Delta_\phi\|_2/\|\bm{\epsilon}\|_2$} &
    \shortstack{Median\\$\|J_\Delta\|_2$} &
    \shortstack{$Q_{.95}$\\$\|J_\Delta\|_2$} &
    \shortstack{Samples with\\$\|J_\Delta\|_2<1$} &
    \shortstack{$Q_{.05}$\\$\sigma_{\min}(I+J_\Delta)$} &
    \shortstack{Median\\$\mathrm{cond}_2$} \\
    \midrule
    $\approx 0.175$ & $0.036$ & $0.065$ & $3{,}072/3{,}072$ & $0.966$ & $1.04$ \\
    \bottomrule
  \end{tabular}
\end{table}

\subsection{AWLS Objective}
\label{app:impl_objective}

Table~\ref{tab:impl_objective} lists the hyperparameters of the AWLS objective
(Eq.~\eqref{eq:awr_loss} with the scale-normalized weight of
Eq.~\eqref{eq:awls_norm}); $\beta=3$ is used for all tasks.

\begin{table}[!t]
\centering
\caption{AWLS advantage-weighting objective hyperparameters.}
\label{tab:impl_objective}
\begin{tabular}{ll}
\toprule
Inverse temperature $\beta$                       & $3$ \\
Advantage normalization                           & per-batch min--max (Eq.~\eqref{eq:awls_norm}) \\
Normalization constant $\varepsilon$              & $10^{-8}$ \\
Normalized advantage range $\tilde{A}$            & $[0,1]$ \\
Per-sample weight $w$                             & $\exp(\beta\tilde{A})$ (mean-normalized) \\
Weight bounds                                     & $[1,\,e^\beta]\approx[1,\,20.09]$ \\
Within-batch weight ratio bound $R$               & $e^\beta\approx 20.09$ \\
ESS lower bound ($\beta=3,\,n=256$)               & $\geq 46.3$ \\
Reconstruction-noise samples per step             & $1$ \\
\bottomrule
\end{tabular}
\end{table}

\subsection{IQL Critic}
\label{app:impl_critic}

Table~\ref{tab:impl_critic} lists the configuration of the IQL critic defined
in Appendix~\ref{app:critic}.

\begin{table}[!t]
\centering
\caption{IQL critic architecture and training configuration.}
\label{tab:impl_critic}
\begin{tabular}{ll}
\toprule
Q-ensemble size                                   & $2$ \\
Pessimistic aggregation                           & $\min_k Q_{\psi_k}$ \\
Value baseline                                     & $V_\nu$, expectile regression \\
Expectile $\tau$                                  & $0.7$ \\
Discount $\gamma$                                 & $0.99$ \\
CQL penalty weight $\alpha_{\mathrm{CQL}}$        & $0.5$ \\
Reward                                            & goal-conditioned binary \\
Target update                                     & Polyak ($\bar\nu$) \\
Target update rate                                & $0.005$ \\
Q / V hidden dimensions                           & $[256,256,256,256]$ \\
Optimizer                                         & Adam \\
Learning rate (Q, V)                              & $3\times10^{-4}$ \\
Gradient clipping                                 & $1.0$ (max norm) \\
Batch size                                        & $256$ \\
Training steps                                    & $8000$ \\
\bottomrule
\end{tabular}
\end{table}

\subsection{Checkpoint-Selection Diagnostics}
\label{app:impl_ckpt}

Table~\ref{tab:impl_ckpt} lists the hyperparameters of the two rollout-free
diagnostics of Appendix~\ref{app:ckpt} ($L_{\text{BoM}}$, Eq.~\eqref{eq:bok};
$S_J$, Eq.~\eqref{eq:sensitivity}) and of the selection rule of
Eq.~\eqref{eq:ckpt_select}.

\begin{table}[!t]
\centering
\caption{Checkpoint-selection diagnostic hyperparameters.}
\label{tab:impl_ckpt}
\begin{tabular}{ll}
\toprule
Latent samples per pair $M$ ($L_{\text{BoM}}$)    & $16$ \\
Perturbation magnitude $h$ ($S_J$)                & $0.01$ \\
Probe directions per state ($S_J$)               & $50$ \\
Validation observations ($S_J$)                  & $32$ \\
$L_{\text{BoM}}$ tolerance $\delta$               & $0.2$ \\
Checkpoint saving / diagnostic interval           & every $50$ epochs \\
Validation set                                    & offline held-out $\mathcal{D}_{\text{val}}$ \\
Perturbation direction $\bm{u}$                   & $\text{Unif}(\mathbb{S}^{d-1})$ \\
\bottomrule
\end{tabular}
\end{table}

\subsection{Inference Cost}
\label{app:impl_latency}

AWLS preserves the one-step inference of its frozen backbone: the residual shaper
$T_\phi$ adds a single small MLP evaluation ($\approx\!0.3$~ms) before the one
decoder call, so the deployed policy costs essentially the same as its one-step base
and far less than multi-step diffusion. Table~\ref{tab:inference_cost} reports the
number of decoder function evaluations (NFE) and the measured wall-clock latency per
action chunk. The one-step methods (FlowPolicy, FQL, MeanFlow, and both AWLS
variants) run at a single NFE, whereas DP3 and DSRL require the full $10$-step
denoising loop; AWLS is therefore $\approx\!4.8\times$ faster than DP3/DSRL while
matching FlowPolicy/FQL, and the shaper adds under $10\%$ over the bare MeanFlow
decoder. The real-world RGB policies (ResNet-18 encoders, $\approx\!267$M-parameter
decoder) are proportionally heavier but preserve the same $10\!:\!1$
one-step-vs-diffusion ratio.

\begin{table}[!t]
\centering
\small
\caption{Inference cost per action chunk. NFE = neural function evaluations of the
decoder; latency is the mean over $1000$ calls on a single NVIDIA RTX~4090 (batch
size $1$, DexArt configuration: $1024$ points, prediction horizon $16$), encoder
included; throughput is the implied closed-loop rate. One-step methods are
$\approx\!4.8\times$ faster than the $10$-step diffusion methods, and AWLS matches
its one-step backbone up to the $\approx\!0.3$~ms shaper.}
\label{tab:inference_cost}
\begin{tabular}{llccc}
\toprule
Method & Generative backbone & NFE & Latency (ms) & Throughput (Hz) \\
\midrule
DP3              & diffusion (DDIM)        & $10$ & $17.8$ & $56$ \\
DSRL             & DP3 $+$ noise steering  & $10$ & $17.8$ & $56$ \\
FlowPolicy       & one-step flow           & $1$  & $3.4$  & $294$ \\
FQL              & one-step flow           & $1$  & $3.4$  & $294$ \\
MeanFlow         & one-step                & $1$  & $3.4$  & $294$ \\
\textbf{FP-AWLS (ours)} & FlowPolicy $+\,T_\phi$ & $1$ & $3.7$ & $270$ \\
\textbf{MF-AWLS (ours)} & MeanFlow $+\,T_\phi$   & $1$ & $3.7$ & $270$ \\
\bottomrule
\end{tabular}
\end{table}

\section{Full Simulation Results}
\label{app:full_sim_results}

Tables~\ref{tab:full_sim_results} and~\ref{tab:full_sim_results_rt} report the
per-task simulation results corresponding to the benchmark-level summary in the
main paper: Table~\ref{tab:full_sim_results} covers the Adroit and DexArt
benchmarks, and Table~\ref{tab:full_sim_results_rt} covers RoboTwin together with
the cross-task Overall average.

\begin{table}[!t]
\centering
\caption{Full per-task success rates (\%) on the Adroit and DexArt simulation
  benchmarks. Each entry is mean$\pm$std over three random seeds ($20$ evaluation episodes per task per seed); the Avg.\
  column under each benchmark is the per-benchmark task average and matches the
  benchmark summary (Table~\ref{tab:summary}). RoboTwin per-task results and the
  cross-benchmark Overall average are in Table~\ref{tab:full_sim_results_rt}. The Avg.\ (and Overall) $\pm$ values are the average of the per-task standard deviations.}
\label{tab:full_sim_results}
\setlength{\tabcolsep}{4pt}
\small
\resizebox{\linewidth}{!}{%
\begin{tabular}{lccccccccc}
\toprule
Method
  & \multicolumn{4}{c}{Adroit (3 tasks, 10 demos)}
  & \multicolumn{5}{c}{DexArt (4 tasks, 100 demos)} \\
\cmidrule(lr){2-5}
\cmidrule(lr){6-10}
  & Hammer & Door & Pen & Avg.
  & Laptop & Faucet & Toilet & Bucket & Avg. \\
\midrule
DP3        & \textbf{100.0\(\pm\)0.0} & 55.8\(\pm\)5.7 & 46.8\(\pm\)4.2 & 67.5\(\pm\)3.3 & 75.8\(\pm\)0.8 & 35.8\(\pm\)1.5 & 61.8\(\pm\)1.1 & 27.8\(\pm\)1.3 & 50.3\(\pm\)1.2 \\
FlowPolicy & \textbf{100.0\(\pm\)0.0} & 61.0\(\pm\)3.4 & 53.0\(\pm\)2.6 & 71.3\(\pm\)2.0 & 76.6\(\pm\)3.0 & 36.4\(\pm\)2.1 & 63.4\(\pm\)1.8 & 25.4\(\pm\)1.9 & 50.4\(\pm\)2.2 \\
MeanFlow   & \textbf{100.0\(\pm\)0.0} & 62.2\(\pm\)7.1 & 54.2\(\pm\)5.4 & 72.1\(\pm\)4.2 & 80.0\(\pm\)2.2 & 40.2\(\pm\)2.5 & 68.6\(\pm\)1.8 & 29.8\(\pm\)2.2 & 54.7\(\pm\)2.2 \\
\midrule
DSRL       & \textbf{100.0\(\pm\)0.0} & 62.6\(\pm\)4.2 & 55.0\(\pm\)3.2 & 72.5\(\pm\)2.5 & 81.4\(\pm\)3.6 & 41.8\(\pm\)4.5 & 71.4\(\pm\)3.5 & 30.8\(\pm\)3.1 & 56.4\(\pm\)3.7 \\
FQL        & \textbf{100.0\(\pm\)0.0} & 64.0\(\pm\)5.1 & 56.8\(\pm\)2.6 & 73.6\(\pm\)2.6 & 83.0\(\pm\)2.2 & 42.8\(\pm\)2.3 & 72.8\(\pm\)1.8 & 31.2\(\pm\)3.1 & 57.4\(\pm\)2.4 \\
\midrule
FP-AWLS    & \textbf{100.0\(\pm\)0.0} & 66.0\(\pm\)3.0 & 58.4\(\pm\)4.9 & 74.8\(\pm\)2.6 & 83.2\(\pm\)2.8 & 43.2\(\pm\)1.9 & 72.6\(\pm\)2.1 & 35.2\(\pm\)1.8 & 58.5\(\pm\)2.1 \\
MF-AWLS    & \textbf{100.0\(\pm\)0.0} & \textbf{69.2\(\pm\)5.8} & \textbf{59.2\(\pm\)4.8} & \textbf{76.1\(\pm\)3.5} & \textbf{86.0\(\pm\)1.4} & \textbf{48.0\(\pm\)1.2} & \textbf{78.0\(\pm\)0.7} & \textbf{36.8\(\pm\)1.9} & \textbf{62.2\(\pm\)1.3} \\
\bottomrule
\end{tabular}%
}
\end{table}

\begin{table}[!t]
\centering
\caption{Full per-task success rates (\%) on the RoboTwin simulation benchmark.
  Each entry is mean$\pm$std over three random seeds ($20$ evaluation episodes per task per seed). Avg.\ is the RoboTwin task
  average and Overall is the cross-task aggregate over all 12 tasks (Adroit,
  DexArt, and RoboTwin); both match the benchmark summary
  (Table~\ref{tab:summary}). Adroit and DexArt per-task results are in
  Table~\ref{tab:full_sim_results}. The Avg.\ (and Overall) $\pm$ values are the average of the per-task standard deviations.}
\label{tab:full_sim_results_rt}
\setlength{\tabcolsep}{5pt}
\small
\begin{tabular}{lccccccc}
\toprule
Method
  & \multicolumn{6}{c}{RoboTwin (5 tasks, 50 demos)}
  & \shortstack{Overall\\Avg.} \\
\cmidrule(lr){2-7}
\cmidrule(lr){8-8}
  & \shortstack{Div.\\Bottles} & \shortstack{Dual\\Bot. H}
  & \shortstack{Empty\\Cup} & \shortstack{Shoe\\Place}
  & \shortstack{Beat\\Hammer} & Avg.
  & \\
\midrule
DP3        & 32.8\(\pm\)4.0 & 38.8\(\pm\)1.8 & 59.8\(\pm\)1.3 & 38.0\(\pm\)8.0 & 55.8\(\pm\)3.9 & 45.0\(\pm\)3.8 & 52.4\(\pm\)2.8 \\
FlowPolicy & 47.6\(\pm\)5.0 & 38.6\(\pm\)2.2 & 65.4\(\pm\)3.0 & 32.4\(\pm\)5.9 & 57.4\(\pm\)4.8 & 48.3\(\pm\)4.2 & 54.8\(\pm\)3.0 \\
MeanFlow   & 52.4\(\pm\)8.2 & 42.2\(\pm\)2.6 & 69.8\(\pm\)1.6 & 39.6\(\pm\)5.0 & 62.2\(\pm\)5.4 & 53.2\(\pm\)4.6 & 58.4\(\pm\)3.7 \\
\midrule
DSRL       & 55.0\(\pm\)5.3 & 43.6\(\pm\)3.1 & 72.0\(\pm\)3.2 & 43.8\(\pm\)6.6 & 64.6\(\pm\)5.9 & 55.8\(\pm\)4.8 & 60.2\(\pm\)3.9 \\
FQL        & 56.4\(\pm\)8.3 & 44.8\(\pm\)2.3 & 74.0\(\pm\)2.2 & 39.4\(\pm\)9.1 & 66.0\(\pm\)2.7 & 56.1\(\pm\)4.9 & 60.9\(\pm\)3.5 \\
\midrule
FP-AWLS    & 57.6\(\pm\)6.5 & 45.6\(\pm\)2.7 & 75.6\(\pm\)3.0 & 42.4\(\pm\)5.7 & 67.0\(\pm\)4.4 & 57.6\(\pm\)4.5 & 62.2\(\pm\)3.2 \\
MF-AWLS    & \textbf{63.4\(\pm\)7.7} & \textbf{50.8\(\pm\)2.9} & \textbf{78.6\(\pm\)2.2} & \textbf{46.0\(\pm\)2.5} & \textbf{69.2\(\pm\)3.1} & \textbf{61.6\(\pm\)3.7} & \textbf{65.4\(\pm\)2.9} \\
\bottomrule
\end{tabular}
\end{table}

\subsection{Isolating Advantage Weighting}
\label{app:ablation_weighting}
Block~(d) of the main-paper ablation (Table~\ref{tab:ablation}) is expanded in
Table~\ref{tab:weighting_controls}. All variants follow the main-text
evaluation protocol on the four DexArt tasks (MeanFlow backbone, clean
demonstrations) and are compared against the main-result MF-AWLS ($62.2$).
``Selected'' is the checkpoint chosen by the rollout-free criterion of
Eq.~\eqref{eq:ckpt_select}, which serves as the shared frozen base for every
post-training variant; because the criterion targets post-training suitability
rather than raw success, it sits slightly below the final BC checkpoint
($54.0$ vs.\ $55.2$), consistent with the diagnostics in
Appendix~\ref{app:ckpt}.
\emph{Uniform} trains the same shaper with all weights equal to one;
\emph{Shuffled} randomly permutes the advantages across samples, so the
weights keep their marginal distribution but lose their association with the
actions; \emph{Adv-filter} keeps only the top-$50\%$ of demonstrations ranked
by their mean advantage and trains the shaper on them with uniform weights. Uniform and
shuffled reweighting remain at BC level, hard filtering helps, and AWLS is best
by retaining all demonstrations and reweighting them softly, which avoids the
coverage loss of hard selection and its sensitivity to imperfect critic
rankings.

\begin{table}[t]
  \centering
  \small
  \caption{Isolating advantage weighting on DexArt (MeanFlow; mean success
  rate over the four tasks, clean demonstrations). All post-training variants
  share the same selected frozen base; AWLS is the main-result MF-AWLS.}
  \label{tab:weighting_controls}
  \begin{tabular}{lcccccc}
    \toprule
    Variant & BC & Selected & Uniform & Shuffled & Adv-filter & \textbf{AWLS} \\
    \midrule
    SR (\%) $\uparrow$ & $55.2$ & $54.0$ & $55.9$ & $55.3$ & $60.2$ & $\mathbf{62.2}$ \\
    \bottomrule
  \end{tabular}
\end{table}

\subsection{Advantage-Temperature Sensitivity}
\label{app:beta_sweep}
Figure~\ref{fig:beta} sweeps the inverse temperature $\beta\in\{1,3,5,10\}$ on
the four DexArt tasks (MeanFlow backbone, clean demonstrations; DexArt-average
success rate, mean over three seeds), comparing the scale-normalized advantage
$\tilde{A}\in[0,1]$ of Eq.~\eqref{eq:awls_norm} with the raw advantage
$A(\bm{o},\bm{a})$. With
normalization the success rate stays within about three points across the
tested range (approximately $59.5$, $61.5$, $60.5$, and $58.5$), so $\beta=3$ is
used in all experiments without task-specific retuning. With the raw advantage
the weights are unbounded (Proposition~\ref{prop:ess}): performance is lower
at every $\beta$ (approximately $46.5$, $48.5$, $45.0$, $37.0$) and degrades as
$\beta$ grows. The sweep is a separate training run from the main-table
MF-AWLS result ($62.2$), which is why its $\beta=3$ value (about $61.5$)
differs slightly from Table~\ref{tab:summary}.

\begin{figure}[t]
  \centering
  \includegraphics[width=0.62\textwidth]{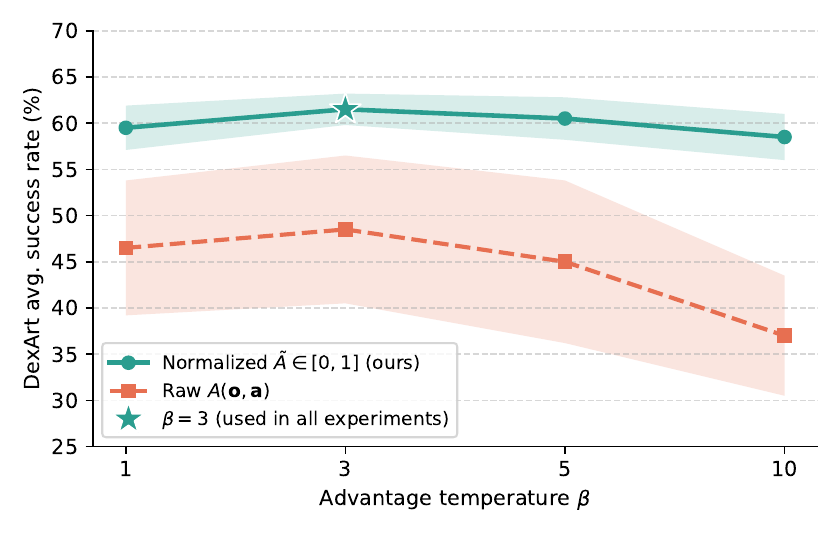}
  \caption{Advantage-temperature sweep on DexArt (MeanFlow, clean
  demonstrations; DexArt-average success rate, mean$\pm$std over three seeds).
  Approximate reconstruction of the submitted sweep figure: marker values are
  accurate to about $\pm0.5$ points and the shaded $\pm1$~std bands are
  reproduced approximately. The
  scale-normalized advantage (ours) is stable across $\beta$, whereas the raw
  advantage is lower and degrades as $\beta$ grows. The star marks $\beta=3$,
  used in all experiments.}
  \label{fig:beta}
\end{figure}

\section{Mixed-Quality Demonstration Construction}
\label{app:mixed_quality}

For the DexArt mixed-quality study, we generate noisy demonstrations by
executing the RL expert with temporally correlated OU action perturbations.
At each control step, the perturbation evolves as
\begin{equation}
  \eta_{t+1}
    = \eta_t + \kappa(\mu - \eta_t)\Delta t
      + \sigma\sqrt{\Delta t}\,\xi_t,
  \qquad \xi_t \sim \mathcal{N}(0,I),
\end{equation}
with zero mean $\mu=0$, mean-reversion rate $\kappa=0.15$, and control timestep
$\Delta t=0.1$~s (the $10$~Hz control period).
The perturbed action is clipped to the valid action range before execution.
We tune the per-task noise scale $\sigma$ (Bucket~$0.15$, Laptop~$0.4$,
Toilet~$0.6$, Faucet~$0.6$; also annotated in the subplot titles of
Figure~\ref{fig:mixed_per_task}) so that the expert rollout success rate falls
from near $100\%$ to below $20\%$, then use the resulting rollouts as the
low-quality pool.
Mixed datasets are constructed by replacing a chosen fraction of clean expert
demonstrations with noisy expert-derived demonstrations while keeping the total
number of demonstrations fixed.

\paragraph{Noisy-pool calibration and statistics.}
The per-task $\sigma$ above is set by a coarse sweep so that the OU-perturbed
expert reaches the target low-quality regime; the resulting noisy-pool rollout
success rates are $18\%$ (\textsc{Bucket}), $16\%$ (\textsc{Laptop}), $14\%$
(\textsc{Toilet}), and $19\%$ (\textsc{Faucet}), all below the $20\%$ ceiling. For
each task we roll out the perturbed expert for $600$ episodes across the three
environment seeds ($200$ per seed) and keep only \emph{usable} trajectories:
time-limited rollouts are retained, but episodes that abort on a simulator fault or
leave the valid workspace ($3$--$6\%$ per task) are filtered out, leaving
$\ge\!500$ trajectories per task before sub-sampling to the requested mix fraction.
The OU process and the clean/noisy assignment use fixed seeds $\{0,1,2\}$ (shared
with the three training seeds), so the mixed datasets are reproducible. Both clean
and noisy demonstrations come from the \emph{same} RL expert and differ only in the
injected action noise, so the two pools share state coverage and differ only in
execution quality.

\paragraph{Limitations of the OU proxy.}
Ornstein--Uhlenbeck action perturbation is a controllable but imperfect
stand-in for genuine human mixed-quality behavior: it injects temporally
correlated, zero-mean noise around the expert action rather than the
structured, goal-directed suboptimality of real teleoperators (hesitation,
re-grasps, indirect paths, and pauses). It therefore captures the
\emph{noisy-execution} axis of mixed quality but not higher-level strategic
suboptimality. The real-world experiments (Appendix~\ref{app:realworld}) complement this
synthetic study along that second axis.

Main-paper Figure~\ref{fig:mixed_quality} reports the DexArt-averaged robustness curve and the action-trajectory PCA overview for this study; here we add the per-task decomposition (Figure~\ref{fig:mixed_per_task}).

\begin{figure}[t]
  \centering
  \includegraphics[width=\linewidth]{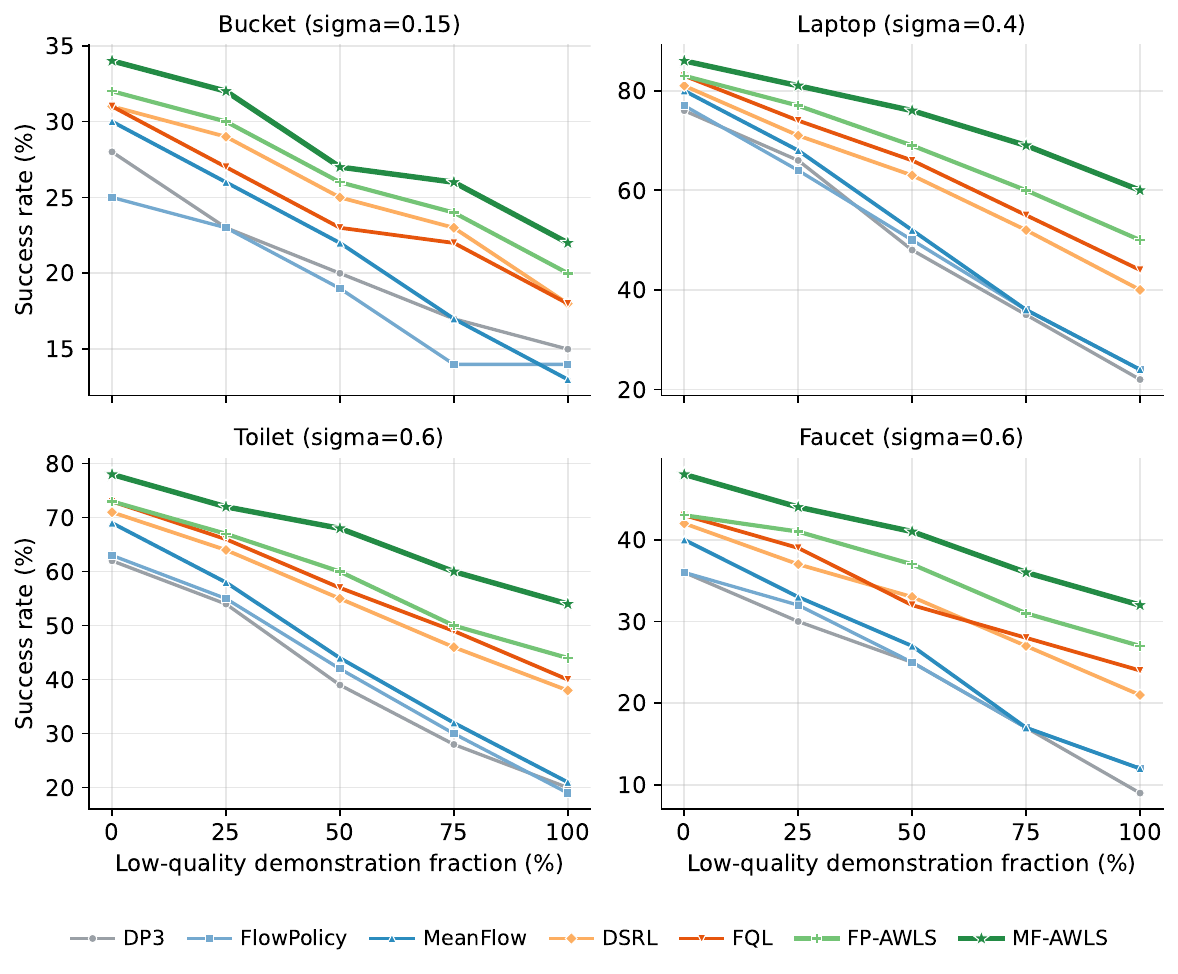}
  \caption{\textbf{Per-task breakdown of the mixed-quality robustness study.}
  Post-adaptation success rate (\%) versus the fraction of low-quality demonstrations in the training mix,
  shown separately for each of the four DexArt tasks. This is the per-task decomposition of the
  DexArt-averaged curves reported in main-paper Figure~\ref{fig:mixed_quality}; each subplot title lists the
  task's OU noise scale $\sigma$ used to corrupt the low-quality demonstrations (larger $\sigma$ = noisier).
  Across every task and at every contamination level, the AWLS variants (green) degrade most gracefully,
  with MF-AWLS (thickest, dark green) retaining the highest success rate even at 100\% low-quality data,
  whereas the behavior-cloning backbones (gray/blue) fall off fastest. The Q-maximization baselines (orange) lie in
  between. The consistent green-over-rest ordering within each panel shows the averaged result is not driven
  by any single task.}
  \label{fig:mixed_per_task}
\end{figure}

\paragraph{Trajectory-PCA methodology.}
The action-trajectory PCA (main-paper Figure~\ref{fig:mixed_quality}, right)
flattens every demonstrated action chunk into a
$(\text{horizon}\times\text{action-dim})$ vector and fits a $2$-component PCA
\emph{on the demonstration actions only} (clean and noisy demos pooled), so the
projection is defined independently of any policy's rollouts; the same fitted
basis is then applied to $200$ generated rollouts per task for every method.
PCA fitting uses fixed seeds $\{0,1,2\}$, so the embedding is reproducible.

\section{Real-World Experiment Details}
\label{app:realworld}

This section expands the real-world setup of
Section~\ref{sec:exp_realworld}; all real-world policies use the MF-AWLS
variant, and the robot is used only for the final evaluation rollouts.

\paragraph{Hardware and control.}
The single-arm tasks use a Universal Robots UR5e manipulator and the bimanual
tasks use two Universal Robots UR3e manipulators. Perception combines one
third-view \mbox{Intel RealSense D455} camera with one wrist-mounted
\mbox{Intel RealSense D405} camera per arm. Unlike the simulation benchmarks,
which consume point clouds, the real-world policy operates directly on
\emph{RGB} images: a third-view stream (\texttt{img}) and a wrist stream
(\texttt{rgb\_hand}), each $256\times256$ and center-cropped to $224\times224$.
Policies run in closed loop at $10$~Hz.

\paragraph{Observation and action spaces.}
Each RGB stream is encoded by an independent ImageNet-pretrained ResNet-18
backbone (global average pooling followed by a linear projection to a
$128$-dimensional feature); the two camera features are concatenated with a
$64$-dimensional embedding of the $7$-dimensional proprioceptive state, giving a
$2\times128+64=320$-dimensional observation feature. The action space is the
$7$-dimensional end-effector command (pose and gripper) for single-arm tasks.
The frozen MeanFlow decoder predicts an action chunk of horizon $4$ from $2$
observation steps and executes the $4$ predicted steps per inference. The
encoder ($\approx\!22.5$M parameters) and the MeanFlow \texttt{ConditionalUnet1D}
decoder (down dimensions $[512,1024,2048]$, $\approx\!267$M parameters) total
$\approx\!289$M parameters.

\paragraph{Demonstrations and BC training.}
All methods are trained from the same offline teleoperated demonstrations:
every real-world task is collected with $70$ demonstrations teleoperated with a
SpaceMouse, with commands expressed in the robot base frame. The
MeanFlow decoder is behavior-cloned for $3000$ epochs with AdamW (learning rate
$10^{-4}$, $\beta=(0.95,0.999)$, weight decay $10^{-6}$), batch size $128$, a
cosine schedule with a $500$-step warmup, gradient-norm clipping at $1.0$, and an
EMA of the weights; checkpoints and evaluations are taken every $200$ epochs.
Even when successful, these demonstrations vary in speed, contact timing,
pauses, corrections, and final alignment, producing the within-success
mixed-quality conditions that advantage weighting is designed to exploit.

\paragraph{Post-training on real data.}
For each task we behavior-clone the MeanFlow decoder, select the frozen base
checkpoint with the rollout-free $S_J$ criterion of
Eq.~\eqref{eq:ckpt_select}, train the offline IQL critic on the same
demonstrations, and run AWLS on top of the $320$-dimensional ResNet-18
observation features using the residual shaper and critic of
Appendix~\ref{app:impl} with detached, scale-normalized advantage weights
($\beta=3$). Critic gradients are never backpropagated into the decoder, and no
real rollouts are collected during this stage.

\paragraph{Baseline parity.}
FQL and DSRL are matched to MF-AWLS in every shared component so that the
real-world comparison isolates the post-training method, not the representation.
All four policies (frozen MeanFlow base, FQL, DSRL, MF-AWLS) use the identical
perception stack described above and the same demonstrations. FQL and DSRL reuse the same offline
IQL critic (architecture and hyperparameters of Appendix~\ref{app:impl_critic})
trained on those demonstrations, are checkpoint-selected by the same rollout-free
$S_J$ rule (Eq.~\eqref{eq:ckpt_select}), and receive the same post-training compute
and hyperparameter-tuning budget---a single shared sweep over learning rate and
steering strength with an equal number of gradient updates. The methods differ only
in the generative decoder and steering module each one intrinsically requires: DSRL
steers a $10$-step diffusion decoder, whereas FQL and MF-AWLS use a one-step decoder
of matched size (Table~\ref{tab:inference_cost}); none of the baselines uses real
rollouts during post-training, exactly as for AWLS.

\paragraph{Evaluation protocol.}
Per-task arms, episode caps, and ordered-stage counts are listed in
Table~\ref{tab:realworld_config}.
Each policy is evaluated for $20$ hardware rollouts per task from randomized but
feasible initial configurations. An episode is capped at $300$ control steps
($30$~s at $10$~Hz), and a rollout counts as successful only if \emph{every}
ordered task stage is completed without human intervention. The per-task stages
are: \textbf{Drawer-Cube}---open drawer, place cube, close drawer ($3$ stages);
\textbf{Tube Insertion}---remove jar, insert tube ($2$); \textbf{Screw
Scooping}---scoop screws, pour into the target container ($2$); \textbf{Gear
Packing}---open box, place gear, close box ($3$); \textbf{Ball
Weighing}---press the scale button and place the ball on the scale ($2$).

\begin{table}[!htb]
\centering
\caption{Per-task real-world configuration. Episode steps is the rollout cap
  ($300$ steps $=30$~s at $10$~Hz); \#Stages is the number of ordered stages
  that must all be completed for a rollout to count as a success.}
\label{tab:realworld_config}
\begin{tabular}{lccc}
\toprule
Task & Arms & Episode steps & \#Stages \\
\midrule
Drawer-Cube    & Single & 300 & 3 \\
Tube Insertion & Single & 300 & 2 \\
Screw Scooping & Single & 300 & 2 \\
Gear Packing   & Dual   & 300 & 3 \\
Ball Weighing  & Dual   & 300 & 2 \\
\bottomrule
\end{tabular}
\end{table}

\begin{table}[!htb]
  \centering
  \small
  \setlength{\tabcolsep}{4.5pt}
  \caption{Real-world success rate (\%) with Wilson score $95\%$ confidence
  intervals, computed from the $20$ hardware rollouts per task of
  Table~\ref{tab:realworld} ($n\!=\!100$ in aggregate).}
  \label{tab:realworld_ci}
  \begin{tabular}{lcccccc}
    \toprule
    Method & Drawer & Tube & Screw & Gear & Ball & Agg.\ (\%) \\
    \midrule
    MeanFlow & $60\,[39,78]$ & $55\,[34,74]$ & $65\,[43,82]$ & $20\,[8,42]$ & $25\,[11,47]$ & $45\,[36,55]$ \\
    FQL  & $65\,[43,82]$ & $70\,[48,85]$ & $75\,[53,89]$ & $25\,[11,47]$ & $20\,[8,42]$ & $51\,[41,61]$ \\
    DSRL & $60\,[39,78]$ & $75\,[53,89]$ & $65\,[43,82]$ & $30\,[15,52]$ & $20\,[8,42]$ & $50\,[40,60]$ \\
    \textbf{MF-AWLS (ours)} & $\mathbf{75}\,[53,89]$ & $\mathbf{85}\,[64,95]$ & $\mathbf{80}\,[58,92]$ & $\mathbf{45}\,[26,66]$ & $\mathbf{40}\,[22,61]$ & $\mathbf{65}\,[55,74]$ \\
    \bottomrule
  \end{tabular}
\end{table}

\paragraph{Success rates with confidence intervals.}
Table~\ref{tab:realworld_ci} expands the main-paper success counts
(Table~\ref{tab:realworld}) with Wilson score $95\%$ confidence intervals computed
directly from the $20$ rollouts per task (and $n\!=\!100$ in aggregate); no
additional experiments are involved. At the per-task level the $20$-rollout
intervals are necessarily wide and overlap across methods, so individual-task
differences are not individually significant and we read the real-world result at
the aggregate level. There, MF-AWLS reaches $65/100$ ($[55.3,73.6]\%$) and its
interval is disjoint from the frozen MeanFlow base ($45/100$, $[35.6,54.8]\%$). The
marginal intervals of MF-AWLS and the strongest Q-maximization baselines (FQL
$51/100$, $[41.3,60.6]$; DSRL $50/100$, $[40.4,59.6]$) overlap. With five tasks of
$20$ rollouts each, the appropriate analysis stratifies by task and corrects for the
three comparisons rather than pooling into a single binomial. A Cochran--Mantel--Haenszel
test stratified by task gives MF-AWLS vs.\ MeanFlow $p\!=\!0.004$, vs.\ DSRL
$p\!=\!0.031$, and vs.\ FQL $p\!=\!0.042$; after Holm correction for the three
comparisons, only the improvement over the frozen behavior-cloning base (MeanFlow)
is significant at $\alpha\!=\!0.05$, while the gains over FQL and DSRL are nominally
below $0.05$ but \emph{do not survive} the correction. We therefore treat the
aggregate comparison as descriptive, rest the significance claim only on the gain
over the BC base, and report the FQL/DSRL margins as suggestive; the per-task
numbers are shown transparently, since $n\!=\!20$ per task is underpowered for
task-level claims.

\end{document}